\documentclass{article}
\usepackage{PRIMEarxiv}
\usepackage{amsmath}

\usepackage[utf8]{inputenc} %
\usepackage[T1]{fontenc}    %
\usepackage[hidelinks]{hyperref}       %
\usepackage{url}            %
\usepackage{booktabs}       %
\usepackage{amsfonts}       %
\usepackage{nicefrac}       %
\usepackage{microtype}      %
\usepackage{lipsum}
\usepackage{fancyhdr}       %
\usepackage{graphicx}       %
\graphicspath{{figures/}}     %

\usepackage[sort&compress, numbers]{natbib}

\usepackage[linesnumbered, ruled, vlined]{algorithm2e}
\usepackage{amsmath}
\usepackage{amssymb}
\usepackage{Definitions}

\pagestyle{fancy}
\thispagestyle{empty}
\rhead{ \textit{ }} 

\title{{\Large Fair Clustering for Data Summarization: \\ Improved Approximation Algorithms and  Complexity Insights}\thanks{\textit{Authors are listed in alphabetical order. Part of this work was completed during Ameet Gadekar's and Suhas Thejaswi's visit to KTH Royal Institute of Technology, Sweden.}}
}

\author{
   Ameet Gadekar \\
   Bar-Ilan University \\
   Ramat Gan, Israel \\
   \texttt{ameet.gadekar@biu.ac.il} \\
   \AND
   Aristides Gionis \\
   KTH Royal Institute of Technology\\
   Stockholm, Sweden \\
   \texttt{argioni@kth.se} \\
   \And
   Suhas Thejaswi \\
   Max Planck Institute for Software Systems \\
   Kaiserslautern, Germany \\
   \texttt{thejaswi@mpi-sws.org}
}

\begin{document}
\maketitle

\begin{abstract}
Data summarization tasks are often modeled as $k$-clustering problems, where the goal is to choose $k$ data points, called cluster centers, that best represent the dataset by minimizing a clustering objective. A popular objective is to minimize the maximum distance between any data point and its nearest center, which is formalized as the $k$-center problem. While in some applications all data points can be chosen as centers, in the general setting, centers must be chosen from a predefined subset of points, referred as facilities or suppliers; this is known as the $k$-supplier problem.
In this work, we focus on \emph{fair} data summarization modeled as the \emph{fair} $k$-supplier problem, where data consists of several groups, and a minimum number of centers must be selected from each group while minimizing the $k$-supplier  objective. The groups can be disjoint or overlapping, leading to two distinct problem variants each with different computational complexity. 

We present $3$-approximation algorithms for both variants, improving the previously known factor of $5$. For disjoint groups, our algorithm runs in polynomial time, while for overlapping groups, we present a fixed-parameter tractable algorithm, where the exponential runtime depends only on the number of groups and centers. We show that these approximation factors match the theoretical lower bounds, assuming standard complexity theory conjectures. Finally, using an open-source implementation, we demonstrate the scalability of our algorithms on large synthetic datasets and assess the price of fairness on real-world data, comparing solution quality with and without fairness constraints.
\end{abstract}

\keywords{Algorithmic Fairness \and Fair Clustering \and Responsible Computing}

\xhdr{Acknowledgements}
Aristides Gionis is supported by the ERC Advanced Grant REBOUND ($834862$), the European Union'{}s Horizon $2020$ research and innovation project SoBigData++ (871042), and the Wallenberg AI, Autonomous Systems and Software Program (WASP) funded by the Knut and Alice Wallenberg Foundation.
Suhas Thejaswi is supported by the European Research Council (ERC) under the European Union'{}s Horizon $2020$ research and innovation program ($945719$) and the European Unions'{}s SoBigData++ Transnational Access Scholarship.

\newpage
\section{Introduction}
Data summarization is a fundamental problem for extracting insights from web data or other sources.  Algorithmic fairness in data summarization is essential to ensure that the insights derived from the data  are unbiased and accurately represent diverse groups. Consider, for example, a web image search for the term ``CEO.'' An algorithmically-fair result should display a small subset of images of CEOs  that accurately represent the population demographics. The summarization task can be modeled as an instance of the \emph{$k$-center problem} (\kcenter),  where images are data points and distances between images represent their dis\-similarity.  We need to find a subset of $k$ data points---called cluster centers---that minimize the maximum distance  from the data points to their closest center. These chosen cluster centers are then displayed as search results.

A case of algorithmic bias is well documented when for the search query ``CEO'', Google Images returned  a much higher proportion of male CEOs compared to the real-world ratio~\citep{kay2015unequal}.  To address such bias, \citet{kleindessner2019fair} introduced the \emph{fair $k$-center problem},  where constraints are imposed to ensure that a minimum number of cluster centers of each demographic group are chosen. For example, if $70\%$ of CEOs in the real-world are male, then for the search query ``CEO'' that returns ten images, about three should feature images of female CEOs.
Additionally, it is possible that some images are of poor quality or contain inappropriate content  and must be excluded from the search results.  This consideration leads to the \emph{fair $k$-supplier problem} (\fairksupplier),  where the cluster centers must be chosen from a specific subset of  data points---called facilities or suppliers---while ensuring fair representation  across groups and minimizing  the maximum distance from  the data points to their closest cluster center~\cite{chen2024approximation}.

Much of the literature on fair clustering considers the demographic groups to be disjoint. However, this assumption is not realistic in modeling the real-world, where individuals belong to multiple groups, such as being non-binary, from minority ethnic groups, and/or economically disadvantaged, thus, forming \emph{intersecting} demographic groups. Ignoring group intersections often overlook crucial nuances introduced by these intricacies and research has shown that intersecting groups often face greater algorithmic discrimination; for example, algorithms were less accurate for black women than for either black people or women individually~\cite{kasy2021fairness}. To address intersectionality in clustering problems,  \citet{thejaswi2021diversity,thejaswi2022clustering} introduced fair clustering problems, where demographic groups may overlap, and a minimum number of cluster centers  must be chosen from each group while minimizing a clustering objective, either $k$-median or $k$-means.\footnote{\citeauthor{thejaswi2022clustering} refers to clustering problems with intersecting facility groups as diversity-aware clustering, as they study the problem in the context of improving diversity in clustering.}

\citeauthor{thejaswi2021diversity}~\cite{thejaswi2021diversity, thejaswi2022clustering} highlight that group intersectionality increases the computational complexity of fair clustering significantly. They prove that the problem is inapproximable to any multiplicative factor in polynomial time and show inapproximability even in special cases, such as when each group has exactly two facilities, when the underlying metric is a tree, and even when allowed to select $f(k)$ cluster centers (for any computable function $f$)  when asked for $k$ cluster centers. On a positive note, for intersecting facility groups, they presented  fixed parameter tractable algorithms (\fpt),\footnote{A problem $\Pi$ is fixed-parameter tractable (\fpt) with respect to a parameter $k$ if, for every instance $(X, k) \in \Pi$, there exists an algorithm with runtime $f(k) \cdot \mathrm{poly}(|X|,k)$, where $f$ depends only on $k$ and $\poly(|X|,k)$ is a polynomial function. The function $f(k)$ is necessarily super-polynomial for \np-hard problems (assuming $P\neq NP$), but allows efficient run\-times for small $k$, even when the input size is large. An \fpt algorithm or parameterized algorithm with respect to parameter $k$ is succinctly denoted as $\fpt(k)$.} yielding $\approx (1+\frac{2}{e})$-approximation for \fairkmedian and $\approx (1 + \frac{8}{e})$-approximation for \fairkmeans. Although \cite{thejaswi2021diversity, thejaswi2022clustering} focus on complexity and algorithmic results for $k$-median and $k$-means objectives, these results can be directly extended to the fair $k$-supplier problem with intersecting groups.\footnote{For algorithmic results, the techniques of~\cite{thejaswi2022clustering} yield a $c$-approximation algorithm for fair $k$-supplier with intersecting groups in time $\fpt(t,k)$, when given a polynomial time subroutine for $c$-approximation for fair $k$-supplier with disjoint groups.}

\xhdr{Our contributions}
Our work focuses on the fair $k$-supplier problem,  which is (informally) defined as follows.  We are given a set of data points in a metric space that are grouped into (possibly intersecting) sets of clients and facilities. In addition, we are given a collection of groups (possibly intersecting) over the facilities,  such as demographic groups defined by a set of protected attributes. Furthermore, we are given a requirement vector specifying  minimum number of facilities  to be chosen from each group, expressing the notion of fairness in the fair $k$-supplier problem. Finally, as it is common in clustering problems,  we consider that the desired number of cluster centers, $k$ is given.  The objective is to select a $k$-sized subset of facilities,  which satisfies the group requirements  while minimizing the maximum distance between any client to its nearest cluster center.
The problem has two variants based on whether the facility groups are disjoint or intersecting.

Our main contributions are to present improved and tight approximation algorithms for both variants of the fair $k$-supplier problem, significantly advancing the state-of-the-art on the approximability. Our algorithmic results are summarized in Table~\ref{tab:results}.
More formally, our contributions are as follows:

\squishlist
\item We present a near-linear time $3$-approximation algorithm for the fair $k$-supplier problem with disjoint groups.
\item For the general variant 
with intersecting groups, we present a fixed-parameter tractable $3$-approximation algorithm with runtime $\fpt(k+t)$, where $k$ is the number of cluster centers and $t$ is the number of groups.\footnote{Fixed-parameter tractable in terms of $k$ and $t$ is denoted as $\fpt(k+t)$ and its running time is of the form $f(k,t) \cdot \poly(n,k,t)$, where $f(k,t)$ can be super-polynomial.}
\item Under standard complexity theory assumptions, we show that the approximation factors match the lower-bound of achievable approximation ratios for both problem variants.
\item Using an open source implementation,\footnote{\url{https://github.com/suhastheju/fair-k-supplier-source}} we validate our scalability claims on large synthetic data and real-world data with modest sizes. We assess the price of fairness by comparing the clustering objective values with and without fairness constraints.
\item For the fair $k$-supplier problem with intersecting facilities, our algorithm is the first with theoretical guarantees on the approximation ratio that scales to instances with modest sizes, while the earlier algorithms with theoretical guarantees struggled to scale in practice.
\squishend

\begin{table*}[]
\centering
\caption{A summary of algorithmic results. Here, $|U| = n$ represents the number of data points, $t$ is the number of groups, and $k$ is the number of cluster centers. In \fairksupplierdisjoint, the groups are disjoint, while \fairksupplier allows for intersecting groups.}
\label{tab:results}
\footnotesize
\begin{tabular}{l c c c c}
\toprule
 & \multicolumn{2}{c}{Known results} & \multicolumn{2}{c}{Our results} \\
\cmidrule(l{0pt}r{3pt}){2-3}  \cmidrule(l{3pt}r{5pt}){4-5}  
Problem & Apx ratio & Time complexity & Apx ratio & Time complexity \\     
\midrule
\fairksupplierdisjoint & $5$ &$\bigO(kn^2 + k^2 \sqrt{k})$~\cite{chen2024approximation} & $3$ & $\bigO((k n + k^2 \sqrt{k})\log n \log k)$ [Theorem~\ref{thm:fairksupplier:3apx}] \\
\midrule
\fairksupplier & $5$ & $\bigO(2^{tk} t( kn + k^2 \sqrt{k}))$ \cite{thejaswi2022clustering, chen2024approximation} & $3$ & $\bigO(2^{tk}k(k n + k^2 \sqrt{k})\log n \log k)$ [Theorem~\ref{thm:div}]\\
\bottomrule
\end{tabular}
\end{table*}

\xhdr{Our techniques}
Here, we give a brief overview of our approximation algorithms.

For the fair $k$-supplier problem with disjoint groups, our algorithm works in two phases. In the first phase, we select a set of $k$ clients, called \emph{good client set} $C'$, ensuring every client is at a distance $2\!\cdot\!\opt$ from $C'$, where $\opt$ is the optimal cost. To obtain such a good client set $C'$, we start by initializing $C'$ with an arbitrary client and iteratively pick a farthest client from $C'$ and add it to $C'$, for $k-1$ times.
Since $C'$ are clients, in the second phase, we recover a feasible set of facilities $S$ using $C'$, that meets the fairness requirements. This step incurs an additional $\opt$-factor, leading to an overall approximation factor of $3$. More precisely, we guess the optimal cost $\opt$ and construct a bipartite graph $H$ between $C'$ and the groups, adding an edge between a client $c \in C'$ to a group $G$ if there exists a facility in group $G$ within distance $\opt$ from $c$. We show that, using $C'$ and any maximum matching in $H$, we can  obtain a $3$-approximate solution  satisfying the fairness constraints. Finally, we can guess $\opt$ efficiently, as there are at most $k \cdot n$ distinct distances between $C'$ and facilities. 

For the fair $k$-supplier problem with intersecting groups, we build on the ideas of~\citet{thejaswi2022clustering}. We reduce an instance of the fair $k$-supplier problem with intersecting groups to many instances of the fair $k$-supplier problem with disjoint groups. The guarantee of our reduction is that there is at least one instance of fair $k$-supplier with disjoint groups whose optimal cost is the same as the original instance of fair $k$-supplier with intersecting groups. Hence, we use the above described near-linear time $3$-approximation on every instance of fair $k$-supplier with disjoint groups and return the solution with minimum cost.

\xhdr{Roadmap}
The remainder of the paper is organized as follows. We formally define the fair $k$-supplier problem in Section~\ref{section:problem} and we present our approximation algorithms, including overview, proof sketches, and tight examples in Section~\ref{section:algorithms}. In Section~\ref{section:experiments} we present the experimental evaluation of our algorithms, showing their scalability compared to baselines. In Section~\ref{section:related} we discuss the related work to the problem we study. Finally, Section~\ref{section:conclusions} offers a short~conclusion, limitations and directions of future work.

\section{Problem definition}
\label{section:problem}
 Before we present our approximation algorithms, let us formally define the fair $k$-supplier problem.
 
\begin{definition}[The fair $k$-supplier problem]
An instance of a fair $k$-supplier is defined on a metric space $(U,d)$ with distance function $d: U \times U \rightarrow \RR_{\geq 0}$, a set of clients $C \subseteq  U$, a set of suppliers (or facilities) $F \subseteq U$, an integer $t > 1$, a collection $\GG= \{G_1, \dots, G_t\}$ subsets of suppliers $G_i \subseteq F$ satisfying $\bigcup_{i \in [t]} G_i = F$, an integer $k > 0$, a vectors of requirements $\alphavec = \{\alpha_1,\dots,\alpha_t\}$, where $\alpha_i \ge 0$ corresponds to group $G_i$. A subset of suppliers $S \subseteq F$ is a feasible solution for the instance if $|S| \leq k$ and $\alpha_i \leq |S \cap G_i|$ for all $i \in [t]$, i.e., at least $\alpha_i$ clients from group~$G_i$ should be present in solution $S$. The clustering cost of solution $S$ is $\max_{c \in C} d(c,S)$. The goal of the fair $k$-supplier problem is to find a feasible solution that minimizes the clustering cost.
\end{definition}

When the facility groups in $\GG$ are disjoint, we denote the problem as \fairksupplierdisjoint. On the other hand, for the general case, when the groups can intersect, we denote the problem as \fairksupplier.

\begin{remark}
For \fairksupplierdisjoint, note that $\sum_{i\in [t]}\alpha_i\le k$, otherwise the instance is infeasible. In fact, without loss of generality, we assume that $\sum_{i\in [t]}\alpha_i=k$. This is because if $\sum_{i\in [t]}\alpha_i < k$, then we can create a new (super) group $G_0 = F$ with requirement $\alpha_0 = k - \sum_{i\in [t]}\alpha_i$. Note that, we now have $\sum_{i=0}^t\alpha_i=k$, and furthermore, the cost of every solution in the original instance is same as its cost in the new instance and vice-versa.\footnote{This may break the metric property of the space but our algorithms are robust to such modifications.}
\end{remark}

We assume that $U = C \cup F$ and we use $|U| = n$ in the analysis of time complexity. Note that $|C|  = n_c \leq |U| = n$ and $|F| = n_f \leq |U| = n$, that is, both the number of clients and facilities are upper bounded by~$n$. 

\section{Approximation algorithms}
\label{section:algorithms}
In this section, we present a polynomial-time $3$-approximation algorithm for \fairksupplierdisjoint and prove that the approximation ratio is tight unless $\p = \np$. Next, we extend our approach to provide a $3$-approximation algorithm for \fairksupplier in $\fpt(k+t)$ time and show that the approximation factor is tight assuming $\wtwo\neq \fpt$.
Due to space constraints, we provide only proof sketches in this section, with detailed proofs deferred to Appendix~\ref{app:omit}.

\begin{theorem} \label{thm:fairksupplier:3apx}
There is a $3$-approximation algorithm for the problem \fairksupplier-$\varnothing$  with runtime $\bigO((kn+k^2\sqrt{k}) \log n \log k)$. Furthermore, assuming $\p \neq \np$, no polynomial-time algorithm achieves $(3-\epsilon)$-approximation for \fairksupplierdisjoint, for any $\epsilon > 0$.
\end{theorem}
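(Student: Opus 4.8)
The plan is to prove the two claims separately: first the algorithmic upper bound, then the matching hardness lower bound. For the upper bound I would follow the two-phase strategy. In the first phase I run the farthest-first (Gonzalez) traversal on the clients to extract a good client set $C'$ with $|C'|=k$: start from an arbitrary client and repeatedly add the client maximizing the distance to the current set. To show that $C'$ covers every client within $2\opt$, I would consider the $k+1$ clients consisting of $C'$ together with the client that would be chosen next; by the greedy rule these are pairwise at distance at least the current covering radius $r$, and since the optimal solution $S^*$ covers all clients using only $k$ facilities within radius $\opt$, two of these $k+1$ clients must share an optimal center, so by the triangle inequality $r\le 2\opt$. Crucially, $C'$ is computed once and does not depend on the guessed radius.

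In the second phase I fix a candidate radius $\rho$ and select a fair facility set. I build a bipartite graph $H$ whose left side is $C'$ and whose right side contains $\alpha_i$ slots for each group $G_i$, placing an edge between $c'\in C'$ and a slot of $G_i$ whenever some facility of $G_i$ lies within $\rho$ of $c'$. I compute a maximum matching $M$ in $H$; for every matched pair I add to $S$ a witnessing facility within $\rho$, and I pad each group $G_i$ with arbitrary facilities of $G_i$ until $|S\cap G_i|=\alpha_i$ (feasible since the instance guarantees $|G_i|\ge\alpha_i$). Using the remark that $\sum_i\alpha_i=k$, this yields $|S|=k$ and satisfies all group requirements. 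The correctness target is that, at $\rho=\opt$, the constructed $S$ covers all of $C'$ within $\opt$, so every client lies within $2\opt+\opt=3\opt$ of $S$. Feasibility of the matching at $\rho=\opt$ I would derive from $S^*$: each representative is covered by $S^*$ within $\opt$, so the edges of $H$ are rich enough, and a Hall/K\"onig-type argument (using that in the disjoint case $|S^*\cap G_i|=\alpha_i$) shows the matching covers enough of $C'$. Since feasibility of the matching is monotone in $\rho$, I can binary search for the smallest admissible radius, which is at most $\opt$.

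The \textbf{main obstacle} is the correctness of this second phase: arguing that \emph{every} representative in $C'$, including those left unmatched by $M$, ends up within $\opt$ of the chosen set $S$. The difficulty is that several representatives may be served by a single optimal center (they can be as close as $r\le 2\opt$ apart), so a perfect matching need not exist and one must reason about a \emph{maximum} matching. I would analyze the deficient set $A\subseteq C'$ reachable by alternating paths: all neighbors of $A$ are saturated groups whose total slot count equals $\sum_{G_i\in\mathcal{G}_A}\alpha_i$, and every client of $A$ has its optimal center inside $\bigcup_{G_i\in\mathcal{G}_A}G_i$; balancing this budget against the $S^*$-centers covering $A$ shows the chosen and padded facilities in these groups can be made to cover all of $A$ within $\opt$. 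Getting the padding rule and this counting exactly right is the delicate part. The runtime then follows: farthest-first runs once in $\bigO(kn)$, each maximum matching is computed with Hopcroft--Karp in $\bigO(k^2\sqrt{k})$ on a graph with $\bigO(k)$ vertices and $\bigO(k^2)$ edges, and binary searching the $\bigO(kn)$ candidate radii (distances between $C'$ and facilities) contributes the $\log n\log k$ factor.

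For the lower bound, I would observe that \fairksupplierdisjoint generalizes ordinary $k$-supplier: taking a single group $G_1=F$ with requirement $\alpha_1=k$ imposes no real constraint and recovers the plain $k$-supplier objective, so any inapproximability for $k$-supplier transfers verbatim. I would then invoke (or, to be self-contained, reproduce) the classical $3$-hardness of $k$-supplier: reduce from an \np-hard covering problem such as a suitable restriction of Set Cover or Dominating Set, building a metric in which client--facility distances take only the values $1$ and $3$ (the value $3$ being forced by the triangle inequality through intermediate client--facility hops), so that a cover of the prescribed size exists if and only if the optimum is $1$, and otherwise it is $3$. A polynomial-time $(3-\epsilon)$-approximation would distinguish these two cases and hence decide the \np-hard problem, contradicting $\p\neq\np$. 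The only thing to verify is that the reduction uses a single group, which makes the produced instance trivially an instance of \fairksupplierdisjoint, so no additional work is needed to handle disjoint groups.
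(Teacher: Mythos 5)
Your Phase 1 (farthest-first on clients, $2\cdot\opt$-goodness via the $(k+1)$-point pigeonhole) and your hardness argument (single group $G_1=F$, $\alpha_1=k$ reduces from \ksupplier, whose $(3-\epsilon)$-hardness is classical) both match the paper. The gap is exactly where you flag it: Phase 2. Running a \emph{maximum} matching on the full $k$-point set $C'$ and then reasoning about the deficient set does not yield factor $3$ with the selection rules you describe. Concretely, take $k=2$, $G_1,G_2$ with $\alpha_1=\alpha_2=1$, and suppose both $c'_1$ and $c'_2$ land in the optimal cluster of $f^*_1\in G_1$, with no $G_2$-facility within $\opt$ of either. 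Then both left vertices see only the $G_1$ slot, every maximum matching has size one, and it may match $c'_2$ rather than $c'_1$. The witnessing facility $f\in G_1$ satisfies only $d(c'_2,f)\le\opt$, so a client $c$ with $d(c,c'_1)=2\opt$ can have $d(c,f)\le d(c,c'_1)+d(c'_1,c'_2)+d(c'_2,f)\le 5\opt$ (and this is realizable), while the padded $G_2$-facility is arbitrary and can be far away. Your budget argument ("every client of $A$ has its optimal center inside $\bigcup_{G_i\in\mathcal{G}_A}G_i$") only places the \emph{optimal} centers near $A$; the facilities your algorithm actually selects are witnesses for \emph{other} representatives, up to $2\opt$ away, so the counting does not close the loop. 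This is essentially why the prior work of Chen et al.\ lands at $5$.

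The paper's resolution is different and is the one idea your proposal is missing: do not use all $k$ representatives. Instead, search over prefixes $C'_\ell=(c_1,\dots,c_\ell)$ of the farthest-first order and demand a matching that saturates the \emph{entire} left side. The prefix ending just before the first time an optimal cluster is hit twice is simultaneously (a) $2\cdot\opt$-good, because the next greedy pick would be a repeat and hence the current covering radius is at most $2\opt$, and (b) left-perfectly matchable at radius $\lambda^*\le\opt$, because its points lie in \emph{distinct} optimal clusters whose centers occupy distinct group slots (using $\sum_i\alpha_i=k$, so $|F^*\cap G_i|=\alpha_i$). Then every representative that survives is matched, so every client is within $2\opt+\opt=3\opt$ of the chosen set, and no deficiency analysis is needed. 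Monotonicity of "a left-saturating matching exists" in $\ell$ (for fixed $\lambda$) and in $\lambda$ (for fixed $\ell$) is what licenses the two nested binary searches and the stated $\bigO((kn+k^2\sqrt{k})\log n\log k)$ runtime; your single binary search over $\lambda$ alone does not recover this.
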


\xhdr{Algorithm overview and comparison with previous work}
Let us recall the $5$-approximation algorithm of \citet{chen2024approximation}, which is based on the techniques introduced by \citet{jones2020fair}. They first solve \ksupplier without fairness constraints. 
Towards this objective, they find a subset $F'$ of facilities that is $3$-good --- that is, every client is at a distance $3$ times the optimal cost from a facility in $F'$. 
They show that selecting the farthest clients iterative\-ly for $k$ steps and choosing the closest facility for each selected client, gives a $3$-good facility set $F'$. This approach is based on the idea of~\citet{hochbaum1986aunified}.

However, the set~$F'$ may be an infeasible solution as it may not satisfy the fairness constraints. 
To satisfy fairness constraints, \citet{chen2024approximation} build ``test-swaps'' in order to swap a subset of $F'$, and use a maximal-matching framework to identify a ``fair-swap''
(a swap, which, if performed, will satisfy the fairness constraints). 
To accomplish this goal, they identify a subset of suitable facilities from $F'$ to replace. ``Suitable'' has a two\-fold interpretation: 
first it aims to minimize the number of facilities to replace, 
since each substitution may increase the objective function value; 
second, the cost of each substitution should not be excessively high, 
so every ``suitable'' facility should be relatively easy to substitute
with a nearby facility while also satisfying fairness constraints.
They construct fair-swaps using a matching framework that introduces an additional factor $2$ in the approximation, leading to a $5$-approximation in polynomial time.

In contrast, our algorithm  adopts a simpler approach (see Algorithm~\ref{alg:3apx}).
Rather than finding a $3$-good facility set in the first phase, we find a $2$-good client set $C'$, that is,  every client is within distance twice the optimal cost from $C'$. This has two advantages --- first, instead of losing factor $3$ by finding a $3$-good facility set, we lose only factor $2$. Second, we can find a feasible solution from $C'$ by losing only an additional factor in the approximation, rather than losing factor $2$, as in~\cite{chen2024approximation}. This is obtained in the second phase  using a matching argument. 

Now, we present the algorithm and give a sketch of its correctness.

\xhdr{Proof sketch of Theorem~\ref{thm:fairksupplier:3apx}}
Our pseudo\-code, described in Algorithm~\ref{alg:3apx},  takes an instance $I=(C,F,\GG=\{G_1,\dots,G_t\},k,\alphavec)$ of problem \fairksupplierdisjoint. Fix an optimal clustering $C^*= \{C^*_1.\dots,C^*_k\}$ for $I$ corresponding to the solution $F^*=\{f^*_1,\dots,f^*_k\}$, and let $\opt$ denote the optimal cost of $F^*$.
On a high level, our algorithm works in two phases. In the first phase, we find a set $C' \subseteq C$ of $k$ clients, called \emph{good client set}, such that $d(c,C') \le 2\cdot \opt$, for every $c\in C$. Note that $C'$ is not a feasible solution to our problem as it is a set of clients and not a set of facilities. Hence, in the second phase, we recover a feasible solution using $C'$, which incurs an additional factor of $\opt$ in the approximation, yielding an overall approximation factor of $3$. 

In more detail, we construct the good client set $C'$ by recursively picking a farthest client (breaking ties arbitrary) from $C'$ and adding  it to $C'$ for $k$ iterations (see the for loop at line~\ref{lin:farthestc}).
Let $C'=(c_1,\dots,c_k)$, where $c_i$ was picked in iteration $i \in [k]$, and let $C'_i = (c_1,\cdots,c_i)$.
We claim that $d(c,C') \le 2\cdot \opt$ for $c \in C$. Towards this, we say that a cluster $C^*_i \in C^*$ is \emph{hit by} $C'$ if $C^*_i \cap C' \neq \emptyset$. Let $\hat{C}_i$, for $i \in [k]$, denote the optimal clusters hit by $C'_i=(c_1,\dots,c_i)$.
If every cluster in $C^*$ is hit by $C'$, then $d(c,C')\le 2\!\cdot\! \opt$, as desired. Now, assume  this is not the case, and hence $C'$ hits some optimal cluster at least twice, as $|C'|=k$.
Then, note that as soon as $C'_i$ hits an optimal cluster twice, for some $i\in[k]$, the present set $C'_i$ is a $2$-good client set. To see this, let $\ell^* \in [k]$ be the first index such that $\hat{C}_{\ell^*+1} = \hat{C}_{\ell^*}$, and let $C^*_i \in C^*$ be the cluster hit by $C'_{\ell^*+1}$ twice --- by $c_{\ell^*+1}$ and by some $c_j \in C'_{\ell^*}$. Then, for any $c \in C$, we have $d(c,C'_{\ell^*}) \le d(c_{\ell^*+1},C'_{\ell^*}) \le d(c_{\ell^*+1},c_j) \le d(c_{\ell^*+1}, f^*_i) + d(f^*_i,c_j) \le 2\!\cdot\! \opt$, since $c_{\ell^*+1}$ was the farthest client from $C'_\ell$.

However, as mentioned before, $C'$ is not a valid solution. The goal of the algorithm now is to obtain a feasible set $S$ (satisfying the group constraints) using $C'$. 
For ease of exposition, assume that every group $G_j \in \GG$ has a requirement $\alpha_j=1$.
Suppose we know $\ell^*$ (it can be shown that a simple binary search on $[k]$ is sufficient for recovering $\ell^*$), then consider the good client set $C'_{\ell^*} = (c_1,\dots,c_{\ell^*})$. We  obtain a feasible solution using $C'_{\ell^*}$ as follows. Let $\lambda^*$ be the maximum distance $d(c_i,F^*)$ for $c_i \in C'_{\ell^*}$. We also assume that $\lambda^*$ is known (later, we show this can be obtained using binary search on a set of size $kn$).
Using $C'_{\ell^*}$ and $\lambda^*$, we create a bipartite graph $H=(C'_{\ell^*}\cup \GG,E)$ (see Lines~\ref{lin:bip1}--\ref{alg:3apx:graph} in Algorithm~\ref{alg:3apx}) where we add an edge $(c_i,G_j)$, for $c_i \in C'_{\ell^*}$ and $G_j \in \GG$, to $E$  if there is a facility $f \in G_j$ such that $d(c_i,f) \le \lambda^*$.
Next, we find a maximum matching $M$ in $H$ on $C'_{\ell^*}$ (Line~\ref{lin:maxmatch}).
A key observation is that such a matching exists in $H$ since for every $c_i \in C'_{\ell^*}$ there is a unique facility in $F^*$ in a unique group in $\GG$ at a distance at most $\lambda^*$ from $c_i$. This is based on the fact that $C'_{\ell^*}$ hits distinct clusters in $C^*$. Once we have matching $M$ on $C'_{\ell^*}$, then for every edge $(c_i,G_j) \in M$, we pick an arbitrary facility from $G_j$ at a distance at most $\lambda^*$ from $c_i \in C'_{\ell^*}$. Again, such a facility exists due to the construction of $H$. Let $S$ be the set of picked facilities. Then, note that, for any $c \in C$, we have $d(c,S) \le d(c,c_i) + d(c_i,S) \le 2\!\cdot\!\opt + \lambda^* \le 3\!\cdot\! \opt$, where $c_i \in C'_{\ell^*}$ is the closest client to $c$ in $C'_{\ell^*}$, and the last inequality follows since $\lambda^* \le \opt$. Finally, note that $S$ may still fail to satisfy the group constraints since  $(i$) $M$ may not match every vertex in $\GG$ of $H$, and/or $(ii)$ the requirements are larger than $1$. But this can be easily handled by adding arbitrary facilities of each unmatched group~to~$S$.

Now, to obtain $\lambda^*$, we can do the following. Let $\Gamma$ denote the set of distances from each client in $C'_{\ell^*}$ to $F$. Note that $|\Gamma| \le |F|\ell \le nk$. Since, $\lambda^*$ is defined to be the largest distance of clients in $C'_{\ell^*}$ to $F$, we have that $\lambda^* \in \Gamma$. Finally, we can do a binary search on the sorted $\Gamma$ to find the smallest distance in $\Gamma$ that returns a feasible matching on $H$.

\xhdr{Time complexity}
Naively iterating over all values of $\ell \in [k]$ and $\lambda \in \Gamma^\ell$ results in $\bigO(k^3 n^2 + k^3 \sqrt{k}~n)$ time, instead we adopt an efficient approach by employing binary-search over $\ell \in [k]$ and $\lambda \in \Gamma^\ell$.
Although there are at most $\ell \cdot n$ distinct radii in $\Gamma^\ell$, it is not necessary to check if a feasible matching exists for each radius in $\Gamma^\ell$ to find the optimal solution $\lambda^\ast$. Instead, binary search on sorted $\Gamma^\ell$ is sufficient and can be done in $\log \ell n$ iterations. If a solution exists for some radius $\lambda > \lambda^\ast$, then we can reduce the radius to check for smaller feasible radii. Conversely, if no feasible solution exists for a radius $\lambda$, we can discard all radii smaller than $\lambda$.
Furthermore, by employing binary search on $\ell \in \{1, \dots, k\}$ to find the maximum $\ell^*$ for which a feasible matching on $\{c_1,\dots,c_\ell\}$ exists, reducing the number of iterations to $\log k$. If a matching exists for some $\ell$ and $\lambda$, a matching also exists for the same $\lambda$ and any smaller $\ell$. Conversely, if no matching exists for a given $\ell$ and $\lambda$, then no matching exists for any larger $\ell$ for the fixed vale of $\lambda$.
The pseudocode is available in Algorithm~\ref{alg:3apx}, where Line~\ref{alg:3apx:loop_ell} is executed for $\log k$ iterations, and Line~\ref{alg:3apx:sort} sorts $\ell n$ elements in $\bigO(\ell n \log \ell n)$, resulting in time complexity of $\bigO(k n \log(kn) \log k)$ for sorting. Further, Line~\ref{alg:3apx:loop_lambda} is executed for $\log(n \ell)$ iterations, with the graph $H^\ell_\lambda$ in Line~\ref{alg:3apx:graph} constructed in $\bigO(n \ell)$ time, and the maximal matching takes $\bigO(k^2 \sqrt{k})$. Thus, the overall time complexity is $\bigO((kn + k^2 \sqrt{k}) \log n \log k)$.\footnote{Precisely, the calculations are as follows $\bigO(k n \log(kn) \log k) + \bigO((kn + k^2 \sqrt{k}) \log (kn) \log k) = \bigO((kn + k^2 \sqrt{k}) \log n \log k)$.}

\xhdr{Hardness of approximation} It is known that~\cite{hochbaum1986unified} for any $\epsilon > 0$ there exists no $3-\epsilon$ approximation algorithm in polynomial time for \ksupplier, assuming $\p \neq \np$. When the number of groups $t$ is equal to $1$, \fairksupplierdisjoint is equivalent to \ksupplier and hence, the hardness of approximation follows.
\hfill$\square$
\smallskip

Next, we  extend our approach to obtain a $3$-approximation algorithm for \fairksupplier, when the facility groups intersect. By combining the methods of \citet{thejaswi2022clustering} and \citet{chen2024approximation}, a $5$-approximation algorithm can be obtained in time $\bigO(2^{tk}k^2n^2)$.\footnote{In Algorithm~\ref{alg:gen3apx}, Line~9 invokes a subroutine for \fairksupplierdisjoint, which can be substituted with the $5$-approximation algorithm from \citet{chen2024approximation}, yielding a $5$-approximation algorithm with time $\bigO(2^{tk}k^2n^2)$. Moreover, any improvement in the approximation ratio or runtime for \fairksupplierdisjoint would translate to improvements for \fairksupplier.}
We improve the approximation ratio to $3$ in time $\bigO(2^{tk}tn(kn + k^2 \sqrt{k}) \log k \log n)$.

\begin{theorem}\label{thm:div}
There is a $3$-approximation for \fairksupplier in time $O(2^{tk} tn(kn+k^2\sqrt{k})\log n \log k)$. Furthermore, assuming $\wtwo\neq \fpt$, there is no $(3-\epsilon)$-approximation for \fairksupplier in $\fpt(k+t)$ time, for any $\epsilon>0$.
\end{theorem}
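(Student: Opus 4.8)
The plan is to make precise the reduction from intersecting to disjoint groups sketched in the overview and then to obtain the lower bound by embedding the hardness of plain \ksupplier into \fairksupplier at $t=1$. \emph{Upper bound.} For each facility $f\in F$ call $\tau(f)=\{i\in[t]:f\in G_i\}\subseteq[t]$ its \emph{type}, and for $T\subseteq[t]$ let the \emph{atom} $A_T=\{f\in F:\tau(f)=T\}$; the atoms partition $F$ into at most $2^t$ disjoint sets. Fixing an optimal solution $F^*=\{f^*_1,\dots,f^*_k\}$ of cost $\opt$, each $f^*_j$ lies in exactly one atom, inducing a type profile $(\tau(f^*_1),\dots,\tau(f^*_k))$. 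My algorithm enumerates all $2^{tk}$ profiles $(T_1,\dots,T_k)$ (one of $2^t$ subsets per slot). Given a profile I first test \emph{fairness}, checking $|\{j:i\in T_j\}|\ge\alpha_i$ for every $i\in[t]$; if this fails I discard it. Otherwise I build a \fairksupplierdisjoint instance whose disjoint groups are the distinct atoms appearing in the profile, with atom $A_T$ carrying requirement $\beta_T=|\{j:T_j=T\}|$ (so $\sum_T\beta_T=k$), and I solve it with the algorithm of Theorem~\ref{thm:fairksupplier:3apx}. I return the cheapest feasible solution found over all profiles.

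\emph{Correctness and running time.} Two directions are needed. First, any $S$ feasible for the disjoint instance of a fairness-passing profile is feasible for the original: since the atoms are disjoint and $|S\cap A_T|\ge\beta_T$, we get $|S\cap G_i|=\sum_{T\ni i}|S\cap A_T|\ge\sum_{T\ni i}\beta_T=|\{j:i\in T_j\}|\ge\alpha_i$ and $|S|=k$, so every returned candidate is feasible for the original instance. Second, the profile induced by $F^*$ passes the fairness test, and for it $F^*$ is feasible for the associated disjoint instance with cost $\opt$; by the first direction no disjoint instance has optimum below $\opt$, so the disjoint optimum for the true profile equals $\opt$. Applying Theorem~\ref{thm:fairksupplier:3apx} yields cost at most $3\opt$, and since we minimize over profiles the output costs at most $3\opt$. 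For the running time there are $2^{tk}$ profiles; computing all types and atoms and the fairness tests costs $\bigO(tn)$, and each disjoint solve costs $\bigO((kn+k^2\sqrt{k})\log n\log k)$, giving the claimed $\bigO(2^{tk}\,tn(kn+k^2\sqrt{k})\log n\log k)$.

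\emph{Lower bound.} I will give a parameterized gap reduction from a problem that is \wtwo-hard parameterized by its solution size---concretely \textsc{Set Cover} with the number of chosen sets as the parameter (equivalently \textsc{Dominating Set})---to the special case $t=1$, $\alpha_1=k$ of \fairksupplier, which coincides with \ksupplier. Given a \textsc{Set Cover} instance with universe $\mathcal U$, sets $\mathcal S$, and budget $k$, I take clients $C=\mathcal U$ and facilities $F=\mathcal S$, and set $d(u,S)=1$ if $u\in S$ and $d(u,S)=3$ otherwise, with $d(u,u')=d(S,S')=2$ for distinct elements and sets; this is a metric with distances in $\{1,2,3\}$. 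A choice of $k$ facilities has \ksupplier cost $1$ exactly when the corresponding $k$ sets cover $\mathcal U$, and cost at least $3$ otherwise. Hence a $(3-\epsilon)$-approximation outputs a solution of cost strictly below $3$ on yes-instances and necessarily at least $3$ on no-instances, thereby deciding \textsc{Set Cover}. Since the produced instance has $t=1$, an $\fpt(k+t)$-time $(3-\epsilon)$-approximation would yield an $\fpt(k)$ algorithm for a \wtwo-hard problem, contradicting $\wtwo\neq\fpt$.

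\emph{Main obstacle.} On the algorithmic side the delicate point is the two-way correctness of the reduction in the presence of requirements larger than one and repeated types---guaranteeing that disjoint-feasibility implies original-feasibility and that the true profile realizes cost exactly $\opt$---rather than the routine complexity bookkeeping. On the hardness side the crux is that the embedding be a genuine parameterized reduction with a clean $\{1,3\}$ gap: the gap must be exactly $3$ so that no multiplicative slack below $3$ survives, and the target parameter $k+t$ must be bounded by a function of the source parameter so that \wtwo-hardness transfers to the $\fpt(k+t)$ regime.
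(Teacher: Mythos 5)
Your proposal is correct and follows essentially the same route as the paper: the type/atom partition is exactly the paper's characteristic-vector partition $\Pcal=\{\Qcal(\gammavec)\}$, the enumeration of fairness-passing profiles with the two-way feasibility argument matches the paper's multiset enumeration, and the lower bound is the same specialization to $t=1$. The only difference is that the paper cites the known $\fpt(k)$-time $(3-\epsilon)$-inapproximability of \ksupplier from \citet{goyal2023tight}, whereas you re-derive it via an explicit gap reduction from \textsc{Set Cover}; your reduction is the standard one and is valid.
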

\xhdr{Proof}
The high level idea of our algorithm (see Algorithm~\ref{alg:gen3apx}) is to reduce the given instance $I$ of \fairksupplier problem to many instances of the same problem but with disjoint groups such that the  at least one instance with disjoint groups has same cost as the optimal cost of $I$. Then, we apply Algorithm~\ref{alg:3apx} on each of the reduced instances to find a $3$-approximate solution and return the solution $T^*$ corresponding to the instance that has smallest cost. By correctness of the reduction, $T^*$ is a $3$-approximate solution to $I$.

In more details, we associate each facility $f \in F$  with a characteristic (bit) vector $\charvec_f \in \{0,1\}^t$, where the $i$-th index is 1 if $f \in G_i$ and 0 otherwise. For each unique bit vector $\gammavec \in \{0,1\}^t$, define $\Qcal(\gammavec) = \{f \in F : \charvec_f = \gammavec\}$ as the subset of facilities with characteristic vector $\gammavec$. The set $\Pcal = \{\Qcal(\gammavec)\}_{\gammavec \in \{0,1\}^t}$ forms a partition of $F$. 
Let $F^*$ be an optimal solution to $I$, and let $\{\gammavec^*_1, \dots, \gammavec^*_k\} \subseteq \{\gammavec\}_{\gammavec \in \{0,1\}^t}$ be the $k$-multiset of bit vectors corresponding to the facilities in $F^*$. Since $F^*$ is feasible, we have $\sum_{i \in [k]} \gammavec^*_i \ge \alphavec$ (element-wise). Hence, if we could find $\{\gammavec^*_1, \dots, \gammavec^*_k\}$, then we can create an instance $J$ of \fairksupplier with disjoint groups  $\{\Qcal(\gammavec^*_1), \dots, \Qcal(\gammavec^*_k)\}$ and run Algorithm~\ref{alg:3apx} on $J$ (see Line~\ref{lin:run3apx}) to obtain a $3$-approximate solution $T$ for $J$. Note that $T$ is also feasible for $I$ and hence a $3$-approximate solution for $I$. 
However, since we do not know $\{\gammavec^*_1, \dots, \gammavec^*_k\}$, we enumerate all feasible $k$-multisets of $\Pcal$ (see Line~\ref{lin:enuallfes}), and run Algorithm~\ref{alg:3apx} on the instances corresponding to the enumerated $k$-multisets. Finally, by returning the minimum cost solution (see Line~\ref{lin:retmin}) over the all the instances, we make sure that the cost of the returned solution is at most the cost of~$T$.
\hfill$\square$
\smallskip

\xhdr{Time complexity} The set $\Pcal$ can be constructed in time $\bigO(2^t n)$ since $|\Pcal| \leq 2^t$. There are ${2^t + k - 1} \choose{k}$ possible $k$-multisets of $\Pcal$, and enumerating them and verifying that they satisfy the range constraints in $\alphavec$  takes $\mathcal{O}(2^{tk} t n)$. For each valid instance, we apply Theorem~\ref{thm:fairksupplier:3apx} to obtain a 3-approximation, which takes $\bigO((kn+k^2 \sqrt{k}) \log n \log k)$. Thus, the overall time complexity is $\bigO(2^{tk} t n (kn + k^2 \sqrt{k}) \log k \log n)$.\footnote{More precisely, the time complexity is $\bigO(2^{tk} t (n + (kn + k^2\sqrt{k}) \log n \log k) = \bigO(2^{tk} t (kn + k^2 \sqrt{k}) \log n \log k)$.}

\xhdr{Hardness of approximation}\footnote{Following a similar argument, our result implies that, for any $\epsilon > 0$, there exists no $(2-\epsilon)$-approximation algorithm in $\fpt(k,t)$-time for \fairkcenter with intersecting groups.}
It is known that~\cite{goyal2023tight} there exists no algorithm that approximates \ksupplier to $3-\epsilon$ factor in $\fpt(k)$ time, for any $\epsilon>0$, assuming $\wtwo\neq \fpt$.
When number of groups $t=1$, \fairksupplier is equivalent to \ksupplier and the hardness of approximation follows by observing that $\fpt(k+t)=\fpt(k)$, for~$t=1$.
\hfill$\square$
\smallskip

\xhdr{Solving fair range clustering} In the literature, the problem variant with restriction on minimum and maximum number of facilities that can be chosen from each group is referred as \emph{fair range clustering}. Our approach can be extended to obtain a $3$-approximation for \fairksupplier with both lower and upper bound requirements, where the number of chosen cluster centers from each group must be within the range specified by lower and upper bound thresholds. Suppose $\betavec=\{\beta_1,\dots,\beta_t\}$ represents the upper bound threshold, then, in Line~\ref{lin:enuallfes} of Algorithm~\ref{alg:gen3apx}, we take into account both  $\alphavec$ and $\betavec$  for computing the feasibility of the multiset considered in Line~\ref{lin:forloop}. Specifically, we change the If condition in Line~\ref{lin:enuallfes} to the following.

\fbox{Line~\ref{lin:enuallfes}: \textbf{If }$\alphavec \le \sum_{i \in [k]} \gammavec_{i} \le \betavec$ \textbf{then}}

\noindent It is routine to check that the instances corresponding to these feasible multisets are, indeed, instances of \fairksupplierdisjoint. Therefore, we obtain a $3$-approximation for this problem with same runtime.

\SetKwBlock{binsearch}{\textsc{Binary-Search}}{end}

\begin{algorithm}[t]
\caption{$3$-approximation algorithm for \fairksupplierdisjoint}\label{alg:3apx}
\DontPrintSemicolon
\KwIn{$I=(C,F,\GG=\{G_1,\dots,G_t\},k,\alphavec)$, an instance of \fairksupplier with disjoint groups}
\KwOut{$S$, a subset of facilities}

$S \gets \emptyset$\\
$C' \gets \text{choose an arbitrary client}~c_1 \in C$

\For(\tcp*[f]{farthest client recursively}){$i \in \{2,\dots,k\}$}{\label{lin:farthestc}
  $c_i \gets \argmax_{c \in C \setminus C'} d(c, C')$ \\
  $C' \gets C' \cup \{c_i\}$\label{lin:addctoC} 
}

\binsearch(on $\ell \in \{1,\dots,k\}$ \textbf{do}) { \label{alg:3apx:loop_ell}
  $S^\ell \gets \emptyset$\\
  $\Gamma^\ell \gets \textsc{Get-Sorted-Radii}(\{c_1,\dots,c_\ell\},F)$ \tcp*{Sorted distances between $\{c_1,\dots,c_\ell\}$ and $F$} \label{alg:3apx:sort}

  \binsearch(on $\lambda \in \Gamma^\ell$ \textbf{do}) { \label{alg:3apx:loop_lambda}
    $T^\ell_\lambda \gets \emptyset$\\
    $\GG' \gets \bigcup_{i \in [t]} \{G_i^1,\dots,G_i^{\alpha_i} \}$\tcp*{$\alpha_i$ vertices for $G_i$}
    
    $V^\ell_\lambda \gets \{c_1,\dots,c_\ell\} \cup \GG'$\label{lin:bip1}\\
    For $c_i \in \{c_1,\dots,c_\ell\}$, add edges $(c_i,G^1_j),\dots, (c_i,G^{\alpha_j}_j)$ to $E^\ell_\lambda$ if there exist $f \in G_j~\st~d(c_i,f) \leq \lambda\}$\label{lin:bip2}\\
    $H^\ell_\lambda \gets (V^\ell_\lambda, E^\ell_\lambda)$ \tcp*{Create a Bipartite graph} \label{alg:3apx:graph}
    
    $M^\ell_\lambda \gets \textsc{Max-Matching}(H^\ell_\lambda, \{c_1,\dots,c_\ell\})$\label{lin:maxmatch} \tcp*{Maximum matching in $H^\ell_\lambda$ on $\{c_1,\dots,c_\ell\}$}

    \If{$M^\ell_\lambda$ is not a matching on $\{c_1,\dots,c_\ell\}$} {
        Continue to Line~\ref{alg:3apx:loop_lambda} 
    }
    \For{$(c_i,G^{j'}_j) \in M^\ell_\lambda$}{ \label{alg:3apx:dummy}
        $T^\ell_\lambda \gets T^\ell_\lambda \cup  \{\text{arbitrary }f \in G_j$ s.t. $d(c_i,f) \le \lambda\}$\\
    } 
    \For{$G_j\in \GG$ such that $|T^\ell_\lambda \cap G_j| < \alpha_j$}{
        Add $\alpha_j- |T^\ell_\lambda \cap G_j|$ many arbitrary facilities from $G_j$ to $T^\ell_\lambda$\label{lin:slackfac}\tcp*{Make $T^\ell_\lambda$ feasible}
    }        
    \If{$\cost(C,T^\ell_\lambda) < \cost(C,S^\ell)$}{
        $S^\ell \gets T^\ell_\lambda$
    }
  }
  \If{$\cost(C,S^\ell) < \cost(C,S)$}{
    $S \gets S^\ell$\\
  }
}
\Return $S$
\end{algorithm}

\begin{algorithm}[t]
\caption{$3$-approximation algorithm for \fairksupplier}\label{alg:gen3apx}
\DontPrintSemicolon
\KwIn{$I=(C,F,\GG=\{G_1,\dots,G_t\},k,\alphavec)$, an instance of \fairksupplier}
\KwOut{$T^*$, a subset of facilities}
\ForEach{$\gammavec \in \{0,1\}^t$} {
    $\Qcal({\gammavec}) \gets \{f \in {F} : \gammavec = \vec{\chi}_{f}\}$
}
$\Pcal \gets \{\Qcal(\gammavec): {\gammavec} \in \{0,1\}^t \}$

$T^* \leftarrow \emptyset$\\
\ForEach{multiset $\{\Qcal(\gammavec_1),\cdots,\Qcal(\gammavec_k)\} \subseteq \Pcal$ of size $k$\label{lin:forloop}} {
  \If{$\sum_{i \in [k]} \gammavec_{i} \ge \alphavec$, element-wise\label{lin:enuallfes}} {
  Let  $\{\Qcal'(\gammavec_1),\cdots,\Qcal'(\gammavec_{k'})\}$ be the set obtained from multiset  $\{\Qcal(\gammavec_1),\cdots,\Qcal(\gammavec_k)\} \subseteq \Pcal$  after removing duplicates\\ 
  Let $\alpha'_i$ be the number of times $\Qcal(\gammavec_i)$ appear in the multiset $\{\Qcal(\gammavec_1),\cdots,\Qcal(\gammavec_k)\}$\\
    Let $T$ be the set returned by
\text{Algorithm}~\ref{alg:3apx} on $(C,F,\{\Qcal'(\gammavec_1),\cdots,\Qcal'(\gammavec_{k'})\},k,\alphavec'=(\alpha'_1,\dots,\alpha'_{k'}))$\label{lin:run3apx}\\
    \If{$\textsf{cost}(C',T) < \textsf{cost}(C',T^*)$\label{lin:retmin}}{
      $T^* \gets T$
    }
  }
}
\Return{$T^*$}
\end{algorithm}

\section{Experiments}
\label{section:experiments}
In this section, we present our experimental setup and datasets used for evaluation, and discuss our findings. The experiments are designed to evaluate the scalability of the proposed algorithms against the baselines,  and study the ``price of fairness'' by comparing the solutions obtained with and without fairness constraints. Additional experimental results are available in Appendix~\ref{app:experiments}.

\xhdr{Experimental setup}
Our implementation is written in {\tt python} using {\tt numpy} and {\tt scipy} packages for data processing. The experiments are executed on a compute server with $2 \times$ Intel Xeon E5-2667\,v2 processor ($2 \times 8$ cores), $256$\,GB\,RAM, and Debian\,GNU/Linux\,12 using a \emph{single core without parallelization}.

\xhdr{Baselines} 
We evaluate our algorithms against the following baselines.  First, we consider the $3$-approximation algorithm for the \ksupplier problem without fairness constraints by \citet{hochbaum1986aunified}. Second, we use the $5$-approximation algorithm for \fairksupplierdisjoint by \citet{chen2024approximation} for  disjoint groups. Lastly, we consider a $5$-approximation algorithm for \fairksupplier with intersecting groups by modifying Algorithm~\ref{alg:gen3apx}. Precisely, instead of invoking Algorithm~\ref{alg:3apx} Line~9, we employ the $5$-approximation from \cite{chen2024approximation} as a subroutine.\footnote{Also, we implemented a brute-force algorithm to find the optimal solution, as expected, it did not scale to large instances.}

\xhdr{Synthetic data}
To evaluate the scalability of the proposed methods and the baselines, we generate synthetic data for various configurations of parameters $n$, $d$, $t$, and $k$, using the {\tt random} subroutine from {\tt numpy}.  First, we randomly partition the data points $U$ into clients and facilities. For \fairksupplierdisjoint instances, the facilities are randomly partitioned into $t$ disjoint groups. For \fairksupplier with intersecting groups, first we partition the facilities into disjoint groups, then sample facilities at random and add them to groups to enable intersections. 

\xhdr{Real-world data}
We use the following subset of datasets from the UCI Machine Learning Repository~\citep{asuncion2007uci}:
{\heart}, {\studentmat}, {\studentperf}, {\nationalpoll}, {\bank}, {\census}, {\creditcard}, and {\bankfull}. 
The data is preprocessed by creating one-hop encoding for columns with categorical data and applying min-max normalization to avoid skewing the cluster centers towards features with larger values. 
For experiments with two disjoint groups ($t=2$), all data points are treated as clients, and a subset of suppliers (facilities) is selected based on a protected attribute: 
$\age \leq 50$ in \heart, 
$\guardian = \mathsf{\mbox{`}mother\mbox{'}}$ in \studentmat and \studentperf, 
$\race = \mathsf{\mbox{`}Black\mbox{'}}$ in \nationalpoll and \census, 
$\education = \mathsf{\mbox{`}secondary}\mbox{'}$ in \bank and \bankfull, and
$\married = \mathsf{\mbox{`}True\mbox{'}}$ in \creditcard. 
The facilities are partitioned into two groups based on the attribute \sex.
For experiments with multiple disjoint groups ($t=5$),  all data points are considered as clients and suppliers are selected based on attribute $\sex$, except in \bank and \bankfull, where $\education = \mathsf{\mbox{`}secondary}\mbox{'}$ is used. The minority partition is considered as facilities and groups are partitioned based on \age, except in \nationalpoll, where \race is used to create groups.

\xhdr{Scalability of Algorithm~\ref{alg:3apx} in synthetic data}
We first evaluate the scalability of algorithms for the \fairksupplierdisjoint problem on synthetic data. We compare our $3$-approximation algorithm (Algorithm~\ref{alg:3apx}) with the $5$-approximation algorithm of \citet{chen2024approximation}. The results are illustrated in Figure~\ref{fig:scalability}. On the left, we report the mean runtime (solid line) and standard deviation (shaded area) for varying dataset sizes,  $n = \{10^4, 2 \cdot 10^4, \dots, 10^7\}$,  where clients and facilities are equally split ($n_c = n_f = \frac{n}{2}$).  The number of groups is $t=5$ and the facilities are randomly partitioned equally among groups,  with the number of cluster centers fixed at $k=10$. All data points have same dimension $d = 5$ and all requirements in $\alphavec$ are same, which is set to $\frac{k}{t}$.

On the right, we report the mean runtime and standard deviation 
for different number of cluster centers  $k=\{5,10,\dots,50\}$, 
with fixed dataset size $n=10\,000$ and an equal split between clients and facilities. 
The number of groups remains $t=5$ and the groups consist of equal number of facilities chosen at random. 
All data points have the same dimension $d=5$ and all requirements in $\alphavec$ are same, 
which is set $\frac{k}{t}$.
For each configuration of $n$, $d$, $t$, and $k$, we generate $5$ independent instances, 
and we execute both algorithms $5$ times per instance, 
each time with a different initialization chosen at random. 
This results in $25$ independent executions per configuration, 
for which we report the mean and standard deviation of running times.

Our implementation of Algorithm~\ref{alg:3apx} achieves significant speedup in running time 
compared to the algorithm of \citet{chen2024approximation}, as expected theoretically, 
and exhibits small variance across independent executions. 
Notably, our implementation solves \fairksupplierdisjoint instances with ten million data points and ten cluster centers in less than $5$ minutes. We terminated some executions due to excessively long running times.

\begin{figure*}
\centering
\setlength{\tabcolsep}{0.01cm}
\renewcommand{\arraystretch}{0.01}
\begin{tabular}{cc}
\includegraphics[width=0.45\textwidth]{../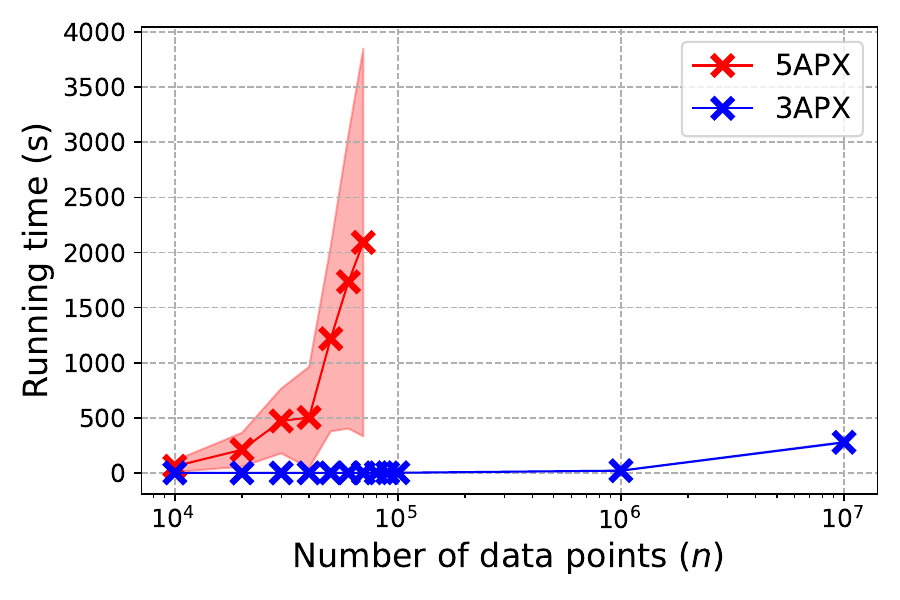} &
\includegraphics[width=0.45\textwidth]{../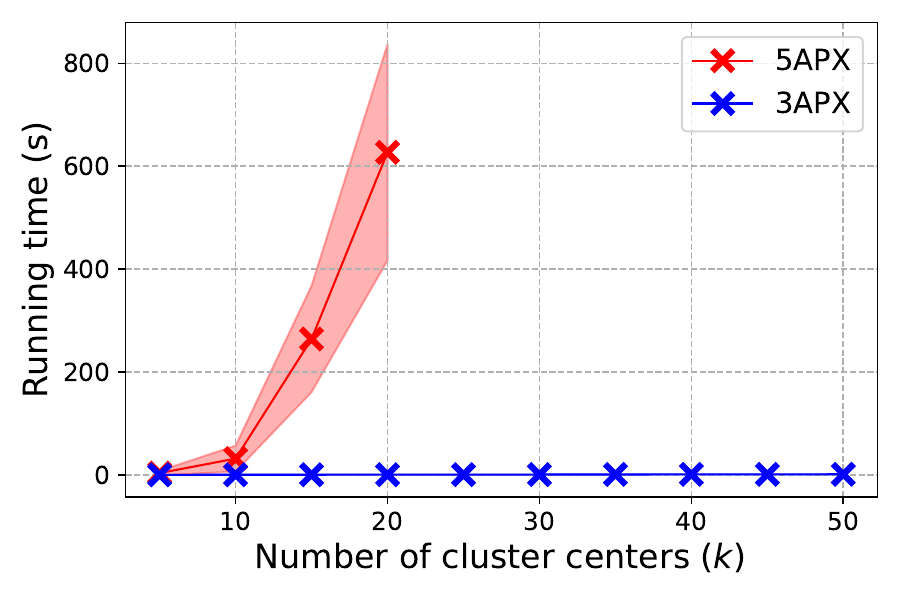}
\end{tabular}
\caption{Scalability of the $3$-approximation algorithm (Algorithm~\ref{alg:3apx}) and the $5$-approximation by \citet{chen2024approximation} for \fairksupplierdisjoint with $t=5$ disjoint groups and fairness requirements $\alphavec = [\frac{k}{t}]^{t}$.}
\label{fig:scalability}
\end{figure*}

\begin{table*}[]
\centering
\caption{\label{tab:running-time}
Running time in real-world datasets for \fairksupplierdisjoint with disjoint groups.}
\setlength{\tabcolsep}{1mm}
\scriptsize
\begin{tabular}{r r r r r r r r r r r r}
\toprule
  & & & & & \multicolumn{1}{c}{$t = 2$} & \multicolumn{3}{c}{$k=10, \alphavec=\{5,5\}$} & \multicolumn{3}{c}{$k=20$, $\alphavec=\{10,10\}$}\\
  \cmidrule(l{0pt}r{5pt}){7-9} \cmidrule(l{0pt}r{3pt}){10-12}
      Dataset &   $n$ & $d$ & $n_c$ & $n_f$ & group sizes & $3$-apx (unfair) & $3$-apx (fair) & $5$-apx (fair) & $3$-apx (unfair) & $3$-apx (fair) & $5$-apx (fair) \\
\midrule
        {\tt Heart} &   299 & 13 &  299 &    74 &   (31, 43) &
                  0.00 ± 0.00 &  0.02 ± 0.00 &     0.06 ± 0.07 &
                  0.01 $\pm$ 0.00 &  0.04 $\pm$ 0.00 &       0.10 $\pm$ 0.08 \\
  {\tt Student-mat} &   395 &  59 &  395 &   273 &  (128, 145) &
                  0.02 ± 0.00 &  0.08 ± 0.00 &     0.25 ± 0.17 &
                  0.07 $\pm$ 0.00 &  0.18 $\pm$ 0.01 &       1.29 $\pm$ 1.27 \\
 {\tt Student-perf} &   649 &  59 &  649 &   455 &  (182, 273) &
                  0.03 ± 0.00 &  0.13 ± 0.00 &     0.30 ± 0.22 &
                  0.08 $\pm$ 0.01 &  0.22 $\pm$ 0.01 &       1.33 $\pm$ 1.74 \\
{\tt National-poll} &   714 &  50 &  714 &    52 &  (23, 29) &
                  0.01 ± 0.00 &  0.03 ± 0.00 &     0.03 ± 0.01 &
                  0.04 $\pm$ 0.00 &  0.06 $\pm$ 0.00 &       0.08 $\pm$ 0.02 \\
         {\tt Bank} &  4521 &  53 &  4521 &  2036 &  (609, 1427) & 
                  0.15 ± 0.01 &  0.65 ± 0.04 &   11.86 ± 10.00 &
                  0.66 $\pm$ 0.01 &  1.59 $\pm$ 0.11 &      18.59 $\pm$ 8.32 \\
  {\tt Credit-card} & 30000 &  24 & 30000 & 13659 & (5190, 8469) &
                 0.51 ± 0.06 &  2.74 ± 0.26 &   38.30 ± 37.37 &
                 2.20 $\pm$ 0.01 &  6.36 $\pm$ 0.58 &   223.17 $\pm$ 183.65 \\
    {\tt Bank-full} & 45211 &  53 & 45211 & 20387 & (6617, 13770) &
                 1.70 ± 0.00 &  7.73 ± 0.60 & 245.14 ± 246.94 &
                 6.63 $\pm$ 0.03 & 18.59 $\pm$ 1.11 & 1253.49 $\pm$ 1279.57 \\
       {\tt Census} & 48842 & 112 & 48842 &  4685 & (2377, 2308) &
                3.64 ± 0.01 & 14.02 ± 0.84 &   18.33 ± 27.19 &
                14.57 $\pm$ 0.06 & 36.09 $\pm$ 1.49 &     46.36 $\pm$ 42.00 \\
\bottomrule
\end{tabular}
\label{tab:my_label}
\end{table*}

\begin{table*}
\centering
\caption{\label{tab:solution-quality} Comparison of quality of solutions in real-world datasets for \fairksupplierdisjoint with $t=2$ disjoint groups.}
\setlength{\tabcolsep}{1mm}
\scriptsize
\begin{tabular}{r r r r r r r r r r r r}
\toprule
  & & & & & \multicolumn{1}{c}{$t = 2$} & \multicolumn{3}{c}{$k=10, \alphavec=\{5,5\}$} & \multicolumn{3}{c}{$k=20$, $\alphavec=\{10,10\}$}\\
  \cmidrule(l{0pt}r{5pt}){7-9} \cmidrule(l{0pt}r{3pt}){10-12}
      Dataset &   $n$ & $d$ & $n_c$ & $n_f$ & group sizes & $3$-apx (unfair) & $3$-apx (fair) & $5$-apx (fair) & $3$-apx (unfair) & $3$-apx (fair) & $5$-apx (fair)\\
\midrule
\heart   &  299   &  13  &  299  &   74  &  (31, 43)  &  3.63  &  {\bf 3.58}  &  3.72  &  {\bf 3.00}  &  3.00  &  3.46 \\
\studentmat    &  395   &  59  &  395  &  273  & (128, 145) & 19.00  & {\bf 18.97}  & 19.00  & 16.76  & {\bf 16.39}  & 17.35 \\
\studentperf    &  649   &  59  &  649  &  455  & (182, 273) & {\bf 18.48}  & 19.19  & 19.54  & {\bf 17.23}  & 17.31  & 18.68 \\
\nationalpoll    &  714   &  50  &  714  &   52  &  (23, 29)  & {\bf 14.50}  & 14.50  & 16.00  & {\bf 14.00}  & 14.00  & 14.50 \\
\bank    & 4521   &  53  & 4521  & 2036  & (609, 1427) & 12.69  & {\bf 12.38}  & 12.85  & 11.05  & {\bf 10.96}  & 12.35 \\
\creditcard    & 30000  &  24  & 30000 & 13659 & (5190, 8469) &  6.44  &  {\bf 6.28}  &  6.46  &  5.92  &  5.97  &  {\bf 5.92} \\
\bankfull    & 45211  &  53  & 45211 & 20387 & (6617, 13770) & {\bf 12.98}  & 12.99  & 13.06  & 11.74  & {\bf 11.49}  & 12.74 \\
\census    & 48842  & 112  & 48842 & 4685  & (2377, 2308) & 15.53  & {\bf 15.09}  & 15.13  & {\bf 13.71}  & 14.17  & 14.62 \\
\bottomrule
\end{tabular}
\end{table*}

\begin{figure*}
\centering
\setlength{\tabcolsep}{0.01cm}
\renewcommand{\arraystretch}{0.01}
\begin{tabular}{cc}
\includegraphics[width=0.45\textwidth]{../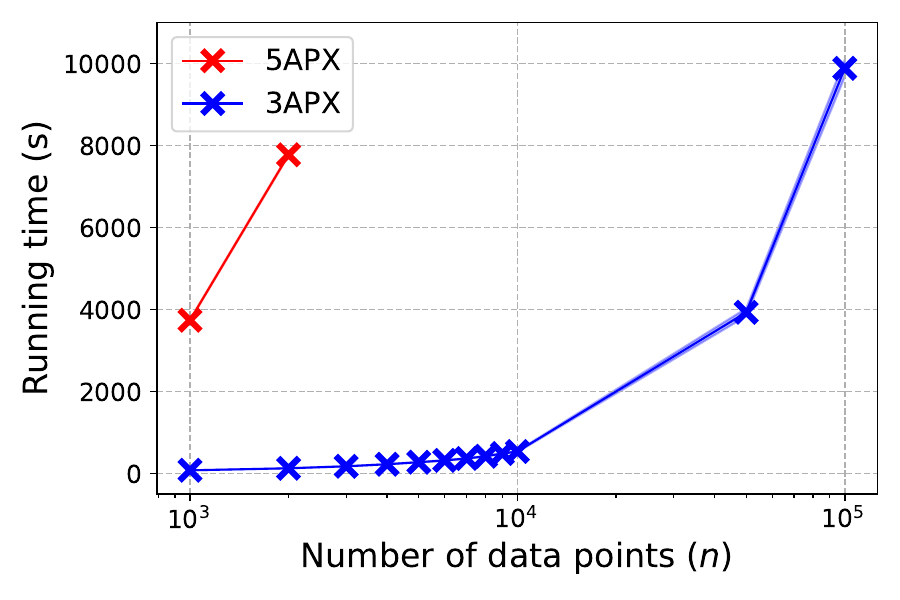} &
\includegraphics[width=0.45\textwidth]{../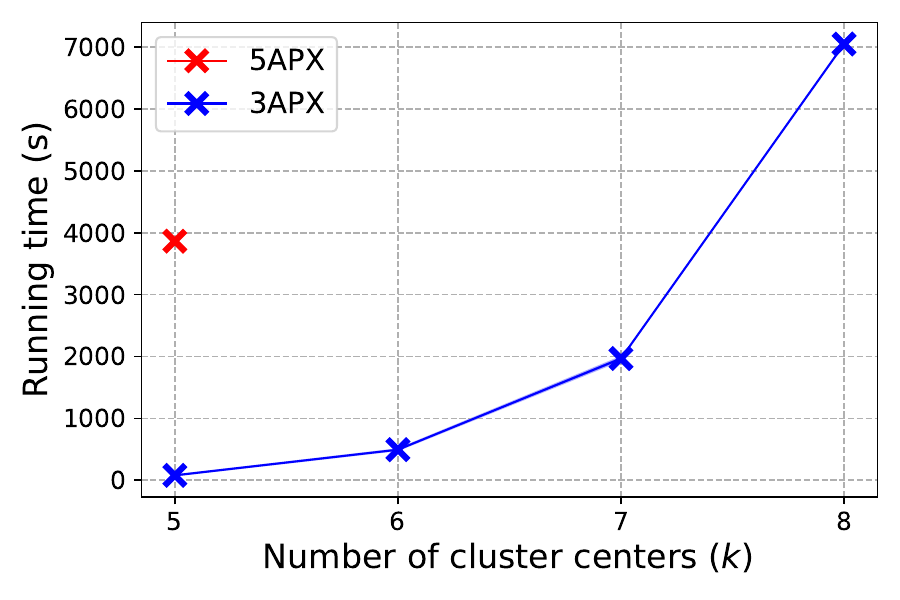}
\end{tabular}
\caption{\label{fig:scalability-div}
Scalability of the $3$-approximation algorithm (Algorithm~\ref{alg:gen3apx}) and the $5$-approximation for \fairksupplier with $t=4$ intersecting groups, where the requirements are uniform across groups with $\alphavec = [2 \cdot \frac{k}{t}]^t$.}
\end{figure*}

\xhdr{Scalability and solution quality in real-world data}
In Table~\ref{tab:running-time}, we report the running times for  the 3-approximation of \ksupplier without fairness constraints (\citet{hochbaum1986aunified}), our $3$-approximation from Algorithm~\ref{alg:3apx}, and  the $5$-approximation of \citet{chen2024approximation},  for \fairksupplierdisjoint with disjoint facility groups.  The experiments are conducted for $k=10$ and $k=20$ with $\alphavec = \{5, 5\}$ and $\{10, 10\}$, respectively.  For each dataset we repeat experiments with $10$ random initializations, and  report the mean and standard deviation of the running times.  Similar experiments are performed by varying $\alphavec$ to enforce selection from minority groups,  but this does not affect significantly the running times. Our implementation of Algorithm~\ref{alg:3apx} solves each real-world instance within one minute for $k=20$.

Table~\ref{tab:solution-quality} shows the minimum clustering objective values from $10$ independent executions. In our experiments, there is no significant difference in objective values between the fair and unfair versions, or between the $3$-approximation and $5$-approximation solutions.\footnote{The reported minimum objective values depends on the random initialization. While the theoretical approximation bound holds for each iteration, evaluating the quality of solution and drawing informed insights with limited number of iterations is challenging.}
For experiments with multiple groups we refer the reader to Appendix~\ref{app:experiments} Table~\ref{tab:running-time:2} and Table~\ref{tab:solution-quality:2}.

\xhdr{Scalability of Algorithm~\ref{alg:gen3apx} for intersecting groups}
Last, we evaluate the methods for the \fairksupplier problem with intersecting facility groups. 
We compare our method in Algorithm~\ref{alg:gen3apx} 
with the $5$-approximation algorithm (\citet{thejaswi2022clustering} + \citet{chen2024approximation}).
On the left, we display the mean runtime (solid line) and standard deviation (shaded area) for different dataset sizes, 
$n = \{10^4, 2 \cdot 10^4, \dots, 10^5\}$, where clients and facilities are split equally ($n_c = n_f = \frac{n}{2}$). 
The number of groups is $t=4$, with facilities sampled equally to each group with size $2 \cdot \frac{n_f}{t}$, and
$k=5$ cluster centers, with fairness requirements $\alphavec = \{2,2,2,2\}$.

On the right, we report the mean runtime and standard deviation for cluster center sizes $k = \{5, 6, 7, 8\}$, 
with a fixed number of data points $n=1\,000$, and clients and facilities are split equally. 
The number of groups remains $t=4$, with facilities chosen randomly for each group with size $2 \cdot \frac{n_f}{t}$. 
All data points have the same dimension $d=5$, and fairness requirements are uniform with $\alphavec = \frac{k}{t}$.
For each configuration, we generate $5$ random instances and execute $5$ iterations of each algorithm per instance, 
with a different initialization, resulting in $25$ total executions per configuration. 
We report the mean and standard deviation of the $25$ executions.

Our algorithm demonstrates modest scalability in both dataset size $n$ and the number of cluster centers $k$, 
with low variance across independent runs. 
Given the computational complexity of the problem (\np-hard and \wtwo-hard with respect to $k$), the modest scalability is expected.
Furthermore, our algorithm is significantly more efficient than the baseline, 
which terminates only for dataset sizes of up to $n = 2 \cdot 10^4$ (left),
and only for up to $k = 5$ centers (right).

\section{Related work}
\label{section:related}
Our work builds upon the existing work on data clustering and algorithmic fairness.

The literature on (fair) clustering is extensive, so we focus on the most relevant research for our work. For a review of clustering, see~\citet{jain1999data}, and for fair clustering, see~\citet{chhabra2021overview}. \kcenter and \ksupplier have been widely studied and numerous algorithmic results are known~\cite{gonzalez1985clustering,hochbaum1985best,hochbaum1986unified,shmoys1994computing,goyal2023tight}. Both problems are known to be \np-hard~\citep{vazirani2001approximation}, and polynomial-time approximation algorithms have been developed  \citep{gonzalez1985clustering,hochbaum1985best,shmoys1994computing}. For \kcenter and \ksupplier, polynomial-time approximation algorithms with factors $2$ and $3$ are known \citep[Theorem~5]{hochbaum1986aunified} \citep[Theorem~2.2]{gonzalez1985clustering}. Assuming $\p \neq \np$, \kcenter and \ksupplier cannot be approximated within factors of $2 - \epsilon$ and $3 - \epsilon$, respectively, for any $\epsilon > 0$ \citep[Theorem~6]{hochbaum1986aunified} \citep[Thoerem~4.3]{gonzalez1985clustering}. 
In the context of fixed-parameter tractability (\fpt), \kcenter and \ksupplier are at least \wtwo-hard with respect to~$k$, meaning that no algorithm with running time $f(k)\!\cdot\!\poly(n, k)$ can solve them optimally; this is implicit in a reduction presented by~\citet{hochbaum1986aunified}.
Assuming $\fpt \neq \wtwo$, \kcenter and \ksupplier cannot be approximated  within factors of $2 - \epsilon$ and $3 - \epsilon$, for any $\epsilon > 0$, in $f(k)\!\cdot\!\poly(n,k)$ time, even when $f(k)$ is an exponential function  \citep[Theorem~2, Theorem~3]{goyal2023tight}.

Fairness in clustering has recently attracted significant attention as a means to reduce algorithmic bias in automated decision-making for unsupervised machine-learning tasks. Various fairness notions have been explored, leading to many algorithmic results~\cite{chierichetti2017fair,ghadiri2021socially,hajiaghayi2012local,chen2016matroid,krishnaswamy2011matroid,abbasi2023parameterized,abbasi2024parameterized}. Our focus is on cluster center fairness, where data points are associated with demographic attributes forming groups, and fairness is applied to the selection of cluster centers while optimizing different clustering objectives such as $k$-median, $k$-means, $k$-center, and $k$-supplier. 
Several problem formulations study different types of constraints   on the number of cluster centers chosen from each group: exact requirement~\citep{jones2020fair,kleindessner2019fair}, lower bound~\citep{thejaswi2021diversity,thejaswi2022clustering},  upper bound~\citep{hajiaghayi2012local,krishnaswamy2011matroid,chen2016matroid}, and combined upper and lower bound~\citep{hotegni2023approximation,angelidakis22fair}.

In the context of fair data summarization, much of the existing literature focuses on the case where demographic groups are disjoint. \citet{kleindessner2019fair} introduced the fair $k$-center problem with disjoint groups,
\footnote{In the literature, the problem is referred to as fair $k$-center (\fairkcenter), even when only disjoint facility groups are considered. In this work, we distinguish between the two cases, referring to the case with disjoint groups as \fairkcenterdisjoint and intersecting groups as \fairkcenter.}
where a specified (exact) number of cluster centers must be chosen from each group and the groups were explicitly disjoint. They presented a $3 \cdot 2^{t-1}$ approximation algorithm in $\bigO(nkt^2 + kt^4)$ time, which was later improved to factor $3$ in time $\bigO(nk + n \sqrt{k} \log k)$ by \citet{jones2020fair}. \citet{angelidakis22fair} considered a variant of \fairkcenterdisjoint with both lower and upper bounds on number of cluster centers from each group and presented a $15$-approximation in time $\bigO(nk^2 + k^5)$. \citet{chen2016matroid} studied the matroid $k$-center-$\varnothing$ problem that generalizes \fairkcenterdisjoint, 
where the chosen cluster centers must form an independent set in a given matroid and presented a $3$-approximation algorithm that runs in $\poly(n)$ time.\footnote{Though the exact running time is not detailed in \citet{chen2016matroid}, it is estimated to be $\Omega(n^2 \log n)$ by \citet{kleindessner2019fair}.}

\citet{chen2024approximation} studied the \fairksupplierdisjoint problem  when the cluster centers must be chosen from a subset of data points called facilities or suppliers  and present a $5$-approximation in time $\bigO(kn^2 + k^2 \sqrt{k})$. \citet{thejaswi2022clustering} studied \fairkmedian and {\sc Fair-$k$-Means} with intersecting facility groups and showed  that the problem is inapproximable to any multiplicative factor in polynomial time.\footnote{In fact, their complexity results hold for any clustering objective.} On the other hand, it is worth noting that their techniques can be extended to obtain a fixed parameter tractable $5$-approx\-imation algorithm with a runtime of $\bigO(2^{tk} k^2 n^2)$ by leveraging the poly\-nomial-time $5$-approximation algorithm of \citet{chen2024approximation} for \fairksupplierdisjoint as a subroutine.

\section{Conclusions, limitations and open problems}
\label{section:conclusions}
\xhdr{Conclusions}
In this paper, we provide a comprehensive analysis of the computational complexity for \fairksupplier in terms of its approximability. Specifically, for the case with disjoint groups, we present a \emph{near-linear} time $3$-approximation algorithm. For the more general case where the groups may intersect, we present a fixed-parameter tractable $3$-approximation algorithm with runtime $\fpt(k+t)$. We also show that the approximation factors can not be improved for both the problems, assuming standard complexity conjectures.
Additionally, we rigorously evaluate the performance of our algorithms through extensive experiments on both real-world and synthetic datasets. 
Notably, for the intersecting case, our algorithm is the first with theoretical guarantees (on the approximation factor) while scaling efficiently to instances of modest size, where the earlier works with theoretical guarantees struggled to scale in practice.

\xhdr{Limitations} 
Although our algorithm for intersecting groups scales to modest-sized instances, designing algorithms that can handle web-scale datasets with millions to billions of points remains an open challenge.
Limited experiments on real-world data are insufficient to assess the cost of enforcing fairness constraints on solution quality (\ie, clustering objective), as it depends on the specific instance as well as the use case. Drawing a more informed conclusion would require a detailed case study with domain-specific insights, which is beyond the scope of this work. Our focus is to present approximation algorithms with theoretical guarantees for \fairksupplier that also scale effectively to real-world data.

\xhdr{Open problems} 
For \fairkcenter, while the lower-bound of $\fpt(k,t)$-time approximation is $2$, but our results imply a $3$-approxi\-mation algorithm in $\fpt(k,t)$ time. An interesting open problem is to either improve the approximation factor for {\sc Fair-$k$-Cen\-ter} or to prove that no such improvement is possible.
Similarly, for \fairksupplier (and \fairksupplierdisjoint), improving the approximation factor for special metric spaces, such as Euclidean spaces, is also a promising direction.
Finally, it remains an open question whether or not a linear time algorithm can be designed for \fairksupplierdisjoint.

\bibliographystyle{ACM-Reference-Format}
\bibliography{references}  

\newpage
\appendix

\section{Additional experimental results} \label{app:experiments}

\begin{table*}[]
\footnotesize
\caption{\label{tab:running-time:2}
Comparison of running times for real-world datasets for \fairksupplierdisjoint with $t=5$ disjoint groups and different requirement vectors.}
\scriptsize
\setlength{\tabcolsep}{0.1cm}
\begin{tabular}{rrrrrrrrrrrr}
\toprule
  & & & & & \multicolumn{1}{c}{$t = 5$} & \multicolumn{3}{c}{$k=10, \alphavec=\{4,3,1,1,1\}$} & \multicolumn{3}{c}{$k=10$, $\alphavec=\{6,1,1,1,1\}$}\\
  \cmidrule(l{0pt}r{5pt}){7-9} \cmidrule(l{0pt}r{3pt}){10-12}
      Dataset &   $n$ & $d$ & $n_c$ & $n_f$ & \multicolumn{1}{c}{group sizes} & $3$-apx & $3$-apx & $5$-apx & $3$-apx & $3$-apx & $5$-apx\\
      & & & & & & unfair & fair & fair & unfair & fair & fair \\
\midrule
\heart & 299  & 13  & 299  & 194  & (27, 56, 58, 34, 19) & 0.00 ± 0.00 & 0.03 ± 0.01 & 2.22 ± 1.53 & 0.00 ± 0.00  & 0.03 ± 0.00  & 2.61 ± 1.25 \\
\studentmat    &  395  &  59  &  395  &  208  & (15, 38, 43, 54, 58) & 0.02 ± 0.00 & 0.10 ± 0.00 & 3.51 ± 1.57 & 0.02 ± 0.00 & 0.10 ± 0.00 & 3.80 ± 1.35 \\
\studentperf    &  649  &  59  & 649  &  383  & (24, 57, 84, 105, 113) & 0.04 ± 0.00 & 0.16 ± 0.01 & 7.02 ± 3.13 & 0.04 ± 0.00 & 0.17 ± 0.01 & 7.75 ± 3.13 \\
\nationalpoll    &  714  &  50  &  714  &  393  & (8, 12, 29, 29, 315) & 0.01 ± 0.00 & 0.04 ± 0.00 & 0.15 ± 0.02 & 0.02 ± 0.00 & 0.04 ± 0.01 & 0.17 ± 0.02 \\
\bank    & 4521  &  53  &  4521  & 2306  & (64, 302, 383, 609, 948) & 0.17 ± 0.01 & 0.69 ± 0.05 & 76.03 ± 31.78 & 0.17 ± 0.00 & 0.70 ± 0.05 & 75.82 ± 31.05 \\
\census    & 48842 & 112  & 48842 & 16192 & (1276, 1873, 3188, 3853, 6002) & 3.37 ± 0.19 & 15.11 ± 1.08 & 3303.58 ± 2173.41  & 3.29 ± 0.21 & 14.75 ± 1.09 & 4981.32 ± 2747.76 \\
\creditcard    & 30000 &  24  & 30000 & 11888 & (179, 1092, 2771, 3281, 4565) & 0.39 ± 0.03       & 1.71 ± 0.13 & 486.48 ± 273.89 & 0.31 ± 0.06 & 1.35 ± 0.17 & 676.08 ± 304.78 \\
\bankfull    & 45211 &  53  &  45211 & 23202 & (672, 3207, 3851, 6011, 9461) & 1.41 ± 0.01       & 6.73 ± 0.40 & 4542.11 ± 1604.71 & 1.44 ± 0.05 & 6.75 ± 0.81 & 4132.72 ± 2011.38  \\
\bottomrule
\end{tabular}
\end{table*}

\begin{table*}[]
\centering
\caption{\label{tab:solution-quality:2} Comparison of quality of solutions in real-world datasets for \fairksupplierdisjoint with $t=5$ disjoint groups.}
\scriptsize
\setlength{\tabcolsep}{0.1cm}
\begin{tabular}{rrrrrrrrrrrrr}
\toprule
  & & & & & \multicolumn{1}{c}{$t = 5$} & \multicolumn{3}{c}{$k=10, \alphavec=\{4,3,1,1,1\}$} & \multicolumn{3}{c}{$k=10$, $\alphavec=\{6,1,1,1,1\}$}\\
  \cmidrule(l{0pt}r{5pt}){7-9} \cmidrule(l{0pt}r{3pt}){10-12}
      Dataset &   $n$ & $d$ & $n_c$ & $n_f$ & \multicolumn{1}{c}{group sizes} & $3$-apx (unfair) & $3$-apx (fair) & $5$-apx (fair) & $3$-apx (unfair) & $3$-apx (fair) & $5$-apx (fair)\\
\midrule
\heart    &  299  &  13  & 299  &  194  & (27, 56, 58, 34, 19) & {\bf 3.79} &  3.90 &  4.10        &  {\bf 3.79}  &  4.12  &  3.96 \\
\studentmat    &  395  &  59  &  395  &  208  & (15, 38, 43, 54, 58) & {\bf 19.73} & 20.14        & 20.07  & {\bf 19.73}  & 20.34 & 20.07 \\
\studentperf    &  649  &  59  &  649  &  383  & (24, 57, 84, 105, 113) & 19.69 & {\bf 19.50} & 19.55 & 19.69 & {\bf 19.49} & 19.71 \\
\nationalpoll    &  714  &  50  &  714  &  393  & (8, 12, 29, 29, 315) & {\bf 14.50} & 14.50 & 16.00 & {\bf 14.50} & 15.00 & 16.00  \\
\bank    & 4521  &  53  & 4521  & 2306  & (64, 302, 383, 609, 948) & {\bf 12.27}  & 12.59        & 12.85  & {\bf 12.27} & 12.65 & 12.75 \\
\census    & 48842 & 112  & 48842 & 16192 & (1276, 1873, 3188, 3853, 6002) &  16.41 & {\bf 15.60} & 17.02 & 16.41  & {\bf 16.05} & 17.02 \\
\creditcard    & 30000 &  24  & 30000 & 11888 & (179, 1092, 2771, 3281, 4565) &  6.94 &  6.94  &  {\bf 6.87}  &  6.94  &  {\bf 6.84} &  7.03 \\
\bankfull    & 45211 &  53  & 45211 & 23202 & (672, 3207, 3851, 6011, 9461) & {\bf 12.87 } & 12.97 & 13.05 & 12.87 & {\bf 12.67} & 13.05 \\
\bottomrule
\end{tabular}
\end{table*}

\begin{figure*}[]
\centering
\setlength{\tabcolsep}{0.01cm}
\renewcommand{\arraystretch}{0.01}
\begin{tabular}{cc}
\includegraphics[width=0.45\textwidth]{../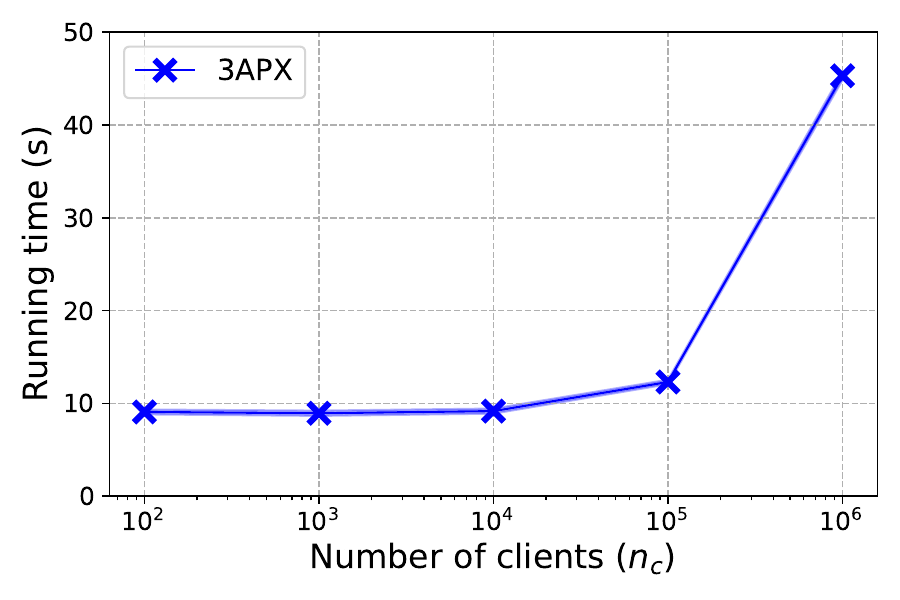} &
\includegraphics[width=0.45\textwidth]{../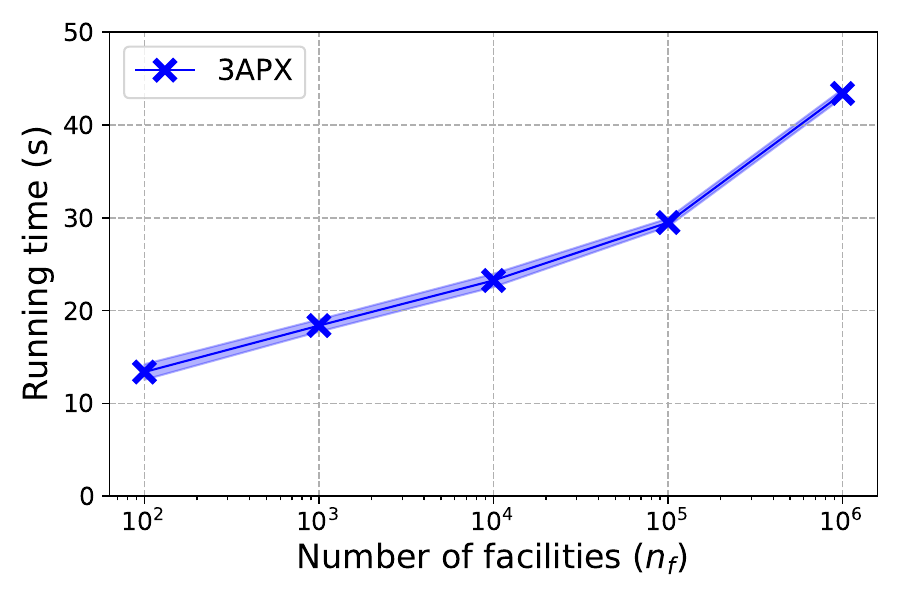}
\end{tabular}
\caption{Scalability of the $3$-approximation algorithm (Algorithm~\ref{alg:3apx}) with respect to number of clients $n_c$ and number of facilities $n_f$ for \fairksupplierdisjoint with $t=5$ disjoint groups and fairness requirements $\alphavec = [\frac{k}{t}]^{t}$.}
\label{fig:scalability:2}
\end{figure*}

\xhdr{Scalability of Algorithm~\ref{alg:3apx} on synthetic data}
In Figure~\ref{fig:scalability:2}, we report the running time of Algorithm~\ref{alg:3apx} as the number of clients and facilities increases, while all other parameters remain constant. On the left, we report the mean and standard deviation of $25$ independent executions ($5$ instances and $5$ executions per instance) for each $n_c = \{10^2, \dots, 10^6\}$ with $n=10^6$, $d=5$, $k=10$, $t=5$, and $\alphavec = \{2, 2, 2, 2, 2\}$. On the right, we vary $n_f = \{10^2, \dots, 10^6\}$ while keeping $n$, $d$, $t$, $k$, and $\alphavec$ fixed. In both cases, we observe increase in running times as $n_c$ and~$n_f$~grow.

\xhdr{Experiments on real-world data}
In Table~\ref{tab:running-time:2}, we report the mean and standard deviation of running times for $10$ independent executions for each dataset by considering $t=5$ disjoint groups for $k=10$ with requirement vector $\alphavec=\{4,3,1,1,1\}$ and $\alphavec=\{6,1,1,1,1\}$. We observed difference in running times when we constrained the requirement vector to choose more facilities from minority group.

In Table~\ref{tab:solution-quality:2}, we report the minimum value of the clustering objective from $10$ independent executions for each dataset by considering $t=5$ disjoint groups for $k=10$ with requirement vector $\alphavec=\{4,3,1,1,1\}$ and $\alphavec=\{6,1,1,1,1\}$. Again, we do not observe significant difference in objective values between the fair and unfair versions, nor between the $3$ and $5$ approximations with fairness constraints.

\section{Proof of Theorem~\ref{thm:fairksupplier:3apx}}\label{app:omit}
Let $\opt$ be the optimal cost of the input instance $I$ to Algorithm~\ref{alg:3apx}. 
Fix an optimal clustering $C^*= \{C^*_1.\dots,C^*_k\}$ to $I$ corresponding to the solution $F^*=\{f^*_1,\dots,f^*_k\}$.
Consider $C'=(c'_1,\dots,c'_k)$ constructed at the end of the for loop in line line~\ref{lin:farthestc}. We claim that $d(c,C') \le 2\cdot \opt$, for every $c \in C$. Suppose $c'_i \in C^*_i$ for all $c'_i \in C'$. In this case, consider $c \in C$ such that $c \in C^*_i$, and hence $d(c,F^*) \le d(c,f^*_i) \le \opt$. Therefore, $d(c,C') \le d(c,c'_i) \le d(c,f^*_i) + d(f^*_i,c'_i) \le 2\cdot \opt$, by triangle inequality, and the fact that $c'_i,c \in C^*_i$. Now, suppose that there is $C^*_i \in C^*$ such that $C^*_i \cap C' = \emptyset$, this means there exist $C^*_j \in C^*$ such that $|C^*_j \cap C'| \ge 2$ since both $C'$ and $C^*$ are of size $k$. Let $c'_a, c'_b \in C' \cap C^*_j$ such that $c'_a$ was added to $C'$ before $c'_b$. Furthermore, let $C'' = (c'_1,\dots,c'_{b-1})$ be the set maintained by the algorithm just before adding $c'_b$ to $C'$. Then, note that $d(c'_b,C'') \le d(c'_b,c'_a) \le 2\cdot \opt$. Since the algorithm selected $c'_b$ to be the furthest point from $C''$, it holds that, for any $c \in C$, we have $d(c,C') \le d(c,C'') \le d(c'_b, C'')\le 2\cdot \opt$, as required.

The next phase of the algorithm obtains a feasible solution from $C'$. Towards this, the algorithm identifies (by binary search) the smallest index $\ell^*$ such that each point in $C'_{\ell^*-1}:=(c'_1,\dots,c'_{\ell^*-1})$ belongs to a unique cluster in $C^*$, but $C'_{\ell^*}:=(c'_1,\dots,c'_{\ell^*})$ does not have this property.\footnote{We let $\ell^*=k+1$, for the corner case.} Next, the algorithm (again using binary search) finds $\lambda^*$, which is defined as the maximum distance between any point in $C'_{\ell^*-1}$ and $F^*$. With $\ell^*$ and $\lambda^*$ in hand, the algorithm constructs a bipartite graph $H^{\ell^*}_{\lambda^*}=(V^{\ell^*}_{\lambda^*},E^{\ell^*}_{\lambda^*})$ a follows. The left partition of $V^{\ell^*}_{\lambda^*}$ contains a vertex for every point in $C'_{\ell^*-1}$, while the right partition contains $\alpha_j$ vertices $\{G^1_j,\dots,G^{\alpha_j}_j\}$ for every $G_j \in \GG$. For each $c'_i \in C'_{\ell^*-1}$ and $G_j\in \GG$, add edges between the vertex $c'_i$ and all vertices $\{G^1_j,\dots,G^{\alpha_j}_j\}$ if there exists a facility in $G_j$ at a distance $\lambda^*$ from $c'_i$ (see lines~\ref{lin:bip1}-\ref{alg:3apx:graph}). The following is the key lemma that is crucial for the correctness of our algorithm.
\begin{lemma}
    There is a matching in $H^{\ell^*}_{\lambda^*}$ on its left partition.
\end{lemma}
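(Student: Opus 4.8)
The plan is to exhibit an explicit left-saturating matching of $H^{\ell^*}_{\lambda^*}$ by routing each client of $C'_{\ell^*-1}$ to (a copy of) the group that contains its nearest optimal center, and then to check that the right partition has enough capacity to absorb this routing. The whole argument rests on two structural facts guaranteed by the definitions of $\ell^*$ and $\lambda^*$, together with the disjointness of the groups.

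First I would record the two facts. By the choice of $\ell^*$, the clients $c'_1,\dots,c'_{\ell^*-1}$ lie in pairwise-distinct optimal clusters, so writing $c'_i \in C^*_{\sigma(i)}$ the index map $\sigma$ is injective on $\{1,\dots,\ell^*-1\}$. Since the optimal clustering assigns every client to a nearest center, $f^*_{\sigma(i)}$ attains $d(c'_i,F^*)$, and therefore $d(c'_i,f^*_{\sigma(i)}) = d(c'_i,F^*) \le \lambda^*$ by the definition of $\lambda^*$ as the maximum of these distances over $C'_{\ell^*-1}$. Consequently $f^*_{\sigma(i)}$ certifies the edges between $c'_i$ and every copy of the (unique, by disjointness) group $G_{j(i)} \ni f^*_{\sigma(i)}$.

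Next I would build the matching from the injective assignment $\phi(c'_i) := f^*_{\sigma(i)}$, whose images are distinct facilities of $F^*$. For a group $G_j$ set $I_j = \{\, i : \phi(c'_i) \in G_j \,\}$; disjointness makes $\{I_j\}_{j\ge 1}$ a partition of $\{1,\dots,\ell^*-1\}$, and injectivity of $\phi$ gives $|I_j| \le |F^* \cap G_j|$. Under the normalization $\sum_j \alpha_j = k$ the optimal solution uses exactly $|F^* \cap G_j| = \alpha_j$ facilities from each genuinely disjoint group, so $|I_j| \le \alpha_j$ matches the number of right-vertices $G^1_j,\dots,G^{\alpha_j}_j$, and the clients of $I_j$ can be matched injectively to them along the edges certified above. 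Taking the union over $j$ yields a matching saturating $C'_{\ell^*-1}$.

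The main obstacle is the capacity bookkeeping, and this is exactly where disjointness is indispensable: it is what lets the distinct images $\phi(c'_i)$ split cleanly among the groups, so that the $\alpha_j$ optimal centers inside $G_j$ cap the demand on $G_j$ at $\alpha_j$. The one case needing extra care is when $\sum_j \alpha_j < k$ originally; then the super-group $G_0 = F$ supplies $\alpha_0 = k - \sum_j \alpha_j$ universal right-vertices adjacent to every client, and the counting identity $\sum_j \max(|F^*\cap G_j| - \alpha_j, 0) = \alpha_0$ shows these copies exactly absorb any overflow of $I_j$ beyond $\alpha_j$, so a left-saturating matching still exists. That this clean split fails for intersecting groups is precisely the source of the \wtwo-hardness that motivates the reduction in Algorithm~\ref{alg:gen3apx}.
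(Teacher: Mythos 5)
Your proof is correct and follows essentially the same route as the paper's: both use that the clients of $C'_{\ell^*-1}$ lie in distinct optimal clusters to obtain distinct nearest facilities of $F^*$ within distance $\lambda^*$, and then observe that disjointness forces $|F^*\cap G_j|=\alpha_j$, so the demand on each group's $\alpha_j$ copies is met. Your explicit treatment of the $\sum_j\alpha_j<k$ super-group case is a small addition the paper delegates to its without-loss-of-generality remark, but it does not change the argument.
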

\begin{proof}
    For ease of presentation, suppose the left partition of $H^{\ell^*}_{\lambda^*}$ is denoted as $C'_{\ell^*-1}=(c'_1,\dots,c'_{\ell^*-1})$. Then note that $|C'_{\ell^*-1}| \le k = |\GG'|$, where $\GG'$ (line~\ref{lin:bip1}) is  the right partition of $H^{\ell^*}_{\lambda^*}$. Let $F^*_j=\{f^1_j,\dots, f^{\alpha_j}_j\}$ be the facilities in $F^* \cap G_j$, for $G_j \in \GG$. Now, consider point $c'_i \in C'_{\ell^*-1}$ and let $f^{j'}_j \in F^*_j$ be the optimal facility in $F^*$ that is closest to $c'_i$. Then, note that $d(c'_i,f^{j'}_j) \le \lambda^*$, by definition of $\lambda^*$. Hence, there is an edge between vertex $c'_i$ and $G^{j'}_j$ in $H^{\ell^*}_{\lambda^*}$. Since each $c'_i \in C'_{\ell^*-1}$ belongs to different cluster in $C^*$ and $|C'_{\ell^*-1}| \le |\GG'|$, we have that there is a matching in $H^{\ell^*}_{\lambda^*}$ on $C'_{\ell^*-1}$, as desired.
\end{proof}
Let $M$ be a matching in $H^{\ell^*}_{\lambda^*}$ on its left partition. Let $T^{\ell^*}_{\lambda^*} \subseteq F$ obtained  (at the end of the for loop at line~\ref{alg:3apx:dummy}) by taking an arbitrary facility from $G_j$, for every $(c'_i,G^{j'}_j)\in M$. Then, note that $d(c'_i,T^{\ell^*}_{\lambda^*}) \le \lambda^* \le \opt$, for every $c'_i \in C'_{\ell^*-1}$. Therefore, $d(c,T^{\ell^*}_{\lambda^*}) \le 3\cdot \opt$, as required. Finally, we add as many arbitrary facilities from each $G_j \in \GG$ to $T^{\ell^*}_{\lambda^*}$ (line~\ref{lin:slackfac}) so that $|T^{\ell^*}_{\lambda^*} \cap G_j| = \alpha_j$.
This completes the proof.

\end{document}